\newcommand{\mymacro}[1]{{#1}}
\newcommand{\defn}[1]{\textbf{#1}}
\newcommand{\ifcondition}{\textbf{ if }}
\newcommand{\otherwisecondition}{\textbf{ otherwise }}
\newcommand{\N}{\mymacro{\mathbb{N}}}
\def\mH{{{\mymacro{ \mathbf{H}}}}}
\DeclareMathSymbol{\mlq}{\mathord}{operators}{``} %
\DeclareMathSymbol{\mrq}{\mathord}{operators}{`'} %
\newcommand{\abs}[1]{\lvert#1\rvert}
\NewDocumentCommand\otable{g}{{\mymacro{ \IfNoValueTF{#1}{ \langle \prefixes, \suffixes, \hankel \rangle}{\langle \prefixes{#1}, \suffixes{#1}, \hankel{#1} \rangle } }}}
\newcommand{\hankel}{{\mymacro{\mH}}}
\newcommand{\emphankel}{{\mymacro{\overline{\mH}}}}
\newcommand{\myalgoname}[1]{\mymacro{\textsc{#1}}}
\newcommand{\membership}{\myalgoname{Membership}\xspace}
\newcommand{\equivalence}{{\myalgoname{Equivalent}}\xspace}
\newcommand{\complete}{{\myalgoname{Complete}}\xspace}
\newcommand{\closed}{{\myalgoname{MakeClosed}}\xspace}
\newcommand{\consistent}{{\myalgoname{MakeConsistent}}\xspace}
\newcommand{\makeAutomaton}{{\myalgoname{MakeAutomaton}}\xspace}
\newcommand{\removeNull}{{\myalgoname{RemoveNullRows}}\xspace}
\newcommand{\prefixes}{{\mymacro{{\mathrm{P}}}}}%
\newcommand{\suffixes}{{\mymacro{{\mathrm{S}}}}} %
\newcommand{\prefixesplus}{\prefixes\circ{\alphabet^{\leq 1}}}
\newcommand{\suffixesplus}{{\alphabet^{\leq 1}} \circ  \suffixes}
\newcommand{\notpropto}{\mymacro{{\propto\kern-1\@ptsize pt \diagup}}}
\newcommand{\prefix}{{\mymacro{\boldsymbol{p}}}}
\newcommand{\suffix}{{\mymacro{\boldsymbol{s}}}}
\newcommand{\sym}{{\mymacro{ a}}}
\newcommand{\eps}{{\mymacro{\varepsilon}}}
\newcommand{\str}{\yy}
\newcommand{\strx}{\xx}
\newcommand{\strz}{\zz}
\renewcommand{\tt}{{ \mymacro{ \boldsymbol{t}}}}
\newcommand{\oracle}{{\mymacro{ \mathbb{O}}}}
\newcommand{\unknownLang}{\ensuremath{\mymacro{\textcolor{ETHBronze}{L^{\bigstar}}}}}
\newcommand{\unknownAut}{\ensuremath{\mymacro{\textcolor{ETHBronze}{\automaton^{\bigstar}}}}}
\newcommand{\edge}[4]{{ \mymacro{ #1 \xrightarrow[]{ #2 / #3 } #4}}}
\newcommand{\one}{{ \mymacro{ \boldsymbol{1}}}}
\newcommand{\zero}{{ \mymacro{{ \boldsymbol{0}} }}}
\newcommand{\weight}{{\mymacro{ w}}}
\newcommand{\lang}{\mymacro{\mathbf{L}}}
\newcommand{\weightset}{{\mymacro{ \mathbb{K}}}}
\newcommand{\semiringtuple}{{\mymacro{ \langle  \weightset, \oplus, \otimes,\zero, \one \rangle}}}
\newcommand{\alphabet}{{ \mymacro{ \Sigma}}}
\newcommand{\inv}[1]{{ \mymacro{ #1^{-1}} }}
\newcommand{\quotient}[1]{\widetilde{#1}}
\newcommand{\xx}{{ \mymacro{ \mathbf{x}}}}
\newcommand{\yy}{{ \mymacro{ \mathbf{y}}}}
\newcommand{\zz}{{\mymacro{ \mathbf{z}}}}
\newcommand{\kleene}[1]{#1^{*}}
\newcommand{\statep}{{\mymacro{\boldsymbol{p}}}}
\newcommand{\stateq}{{\mymacro{\boldsymbol{q}}}}
\newcommand{\stater}{{\mymacro{\boldsymbol{r}}}}
\newcommand{\states}{{\mymacro{Q}}}
\newcommand{\automaton}{{\mymacro{ \mathcal{A}}}}
\newcommand{\apath}{{\mymacro{ \boldsymbol{\pi}}}}
\newcommand{\pathweight}[1]{{ \mymacro{ \boldsymbol{w}\left( #1 \right)}}}
\newcommand{\pathyield}[1]{{ \mymacro{ \sigma\left( #1\right) }}}
\NewDocumentCommand\Paths{gg}{%
    \IfNoValueTF{#2}{%
        {\mymacro{ \ensuremath{ \Pi_{#1} }  }}%
    }{%
        {\mymacro{ \ensuremath{ \Pi_{#1}\left(#2\right) }}}%
    }
}
\newcommand{\AutLang}[1]{{\mymacro{ \ensuremath{\lang_{#1}}}}}
\NewDocumentCommand\firstq{g}{%
    \IfNoValueTF{#1}{%
        {\mymacro{ \ensuremath{ \textit{i} }  }}%
    }{%
        {\mymacro{ \ensuremath{ \textit{i}\left(#1\right) }}}%
    }
}
\NewDocumentCommand\Languages{g}{\mymacro{ \IfNoValueTF{#1}{{\mathcal{L}}}{\mathcal{L}{\left(#1\right)} }}}
\NewDocumentCommand\qlang{gg}{\mymacro{ \IfNoValueTF{#2}{{\ell_{#1}}}{\ell_{#1}\left( #2 \right)}}}
\NewDocumentCommand\rightlang{ggg}{\mymacro{ \IfNoValueTF{#3}{{\inv{#2}#1}}{\inv{#2}#1\left(#3\right)}}}
\newcommand{\transf}{{\mymacro{ \ensuremath{\delta}}}}
\newcommand{\initf}{{\mymacro{ \ensuremath{\lambda}}}}
\newcommand{\finalf}{{\mymacro{ \ensuremath{\rho}}}}
\NewDocumentCommand\wfsatuple{g}{%
    \IfNoValueTF{#1}{%
        {\mymacro{ \ensuremath{ \left( \alphabet, \states, \transf, \initf, \finalf \right) }  }}%
    }{%
        {\mymacro{ \ensuremath{ \left( \alphabet, \states{#1}, \transf{#1}, \initf{#1}, \finalf{#1} \right) }  }}%
    }
}
\newcommand{\qinit}[0]{\mymacro{\ensuremath{q_{I}}}}
\NewDocumentCommand\lastq{g}{%
    \IfNoValueTF{#1}{%
        {\mymacro{ \ensuremath{ \textit{f} }  }}%
    }{%
        {\mymacro{ \ensuremath{ \textit{f}\left(#1\right) }}}%
    }
}
\NewDocumentCommand\state{g}{\mymacro{ \IfNoValueTF{#1}{{\bf q}}{ {\bf q}{#1} }}}
\newcommand{\defeq}{\overset{\textnormal{\tiny def}}{=}}
\newcommand{\emphankelautomaton}{\mymacro{\widetilde{\automaton}_{\hankel}}}
\newcommand{\naivehankelautomaton}{\mymacro{\automaton_{\hankel}}}
\newcommand{\bigo}[1]{{\mymacro{ \mathcal{O}\left( #1\right)}}}
\newcommand{\lstar}{\mymacro{\ensuremath{\mathbf{L^*}}}\xspace}
\newcommand{\supp}[1]{\mymacro{\ensuremath{\text{supp}(#1)\xspace}}}
\title{An $\mathbf{\lstar}$ Algorithm for Deterministic Weighted Regular Languages}
\author{%
Clemente Pasti
~\;~\;~Talu Karag{\"o}z
~\;~\; Anej Svete \\
\textbf{Franz Nowak} \;~\textbf{Reda Boumasmoud}\;~\textbf{Ryan Cotterell}\\
\{\texttt{\href{mailto:clemente.pasti@inf.ethz.ch}{clemente.pasti}}, \texttt{\href{mailto:anej.svete@inf.ethz.ch}{anej.svete}}, \texttt{\href{mailto:franz.nowak@inf.ethz.ch}{franz.nowak}}, \texttt{\href{mailto:ryan.cotterell@inf.ethz.ch}{ryan.cotterell}}\}\texttt{@inf.ethz.ch} \\
\texttt{\href{mailto:reda.boumasmoud@math.ethz.ch}{reda.boumasmoud@math.ethz.ch}},\: \texttt{\href{mailto:talukaragoz@gmail.com}{talukaragoz@gmail.com}} \\
\includegraphics[width=2.8cm]{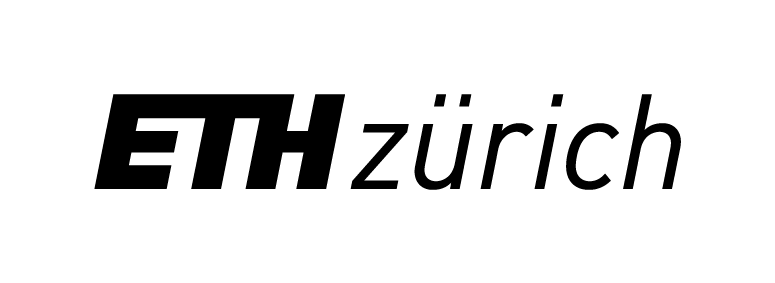}
 }
\begin{document}
\maketitle
\begin{abstract}
    Extracting finite state automata (FSAs) from black-box models offers a powerful approach to gaining interpretable insights into complex model behaviors.
    To support this pursuit, we present a weighted variant of \citeposs{ANGLUIN198787} \lstar algorithm for learning 
    FSAs. 
    We stay faithful to the original algorithm, devising a way to exactly learn deterministic weighted FSAs whose weights support division.
    Furthermore, we formulate the learning process in a manner that highlights the connection with FSA minimization, showing how \lstar directly learns a minimal automaton for the target language.
    
    \vspace{1em}
    {\includegraphics[width=1.36em,height=1.25em]{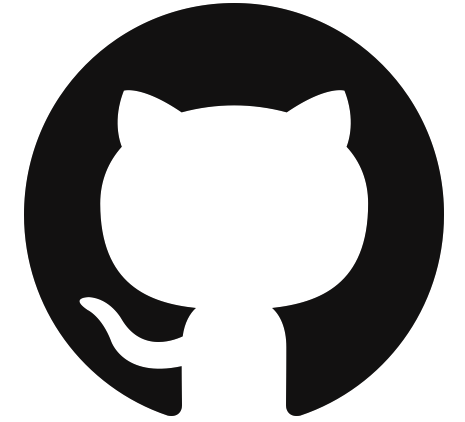}\hspace{2pt}\raisebox{0.5\height}{\parbox{\dimexpr\linewidth-2\fboxsep-2\fboxrule}{\href{https://github.com/rycolab/weighted-angluin}{\texttt{github.com/rycolab/weighted-angluin}}}}}
\end{abstract}

\section{Introduction}
Learning formal languages from data is a classic problem in computer science.
Unfortunately, learning only from positive examples is impossible \citep{GOLD}.
By granting the learner access to more than just positive examples, \citet{ANGLUIN198787} introduced the \emph{active} learning scheme \lstar, where the learner interacts with an oracle by asking it queries.
Concretely, \citeposs{ANGLUIN198787} \lstar algorithm learns regular languages in the form of deterministic finite-state automata (DFSAs) from \emph{membership} queries (analogous to asking for a ground truth label of a string in the training dataset) and \emph{equivalence} queries (analogous to asking whether a hypothesis is correct).\looseness=-1

Weighted formal languages, where strings are assigned weights such as probabilities or costs, naturally generalize membership-based (boolean) formal languages.
Weighted languages, especially probabilistic languages, serve as a cornerstone in the conceptual framework of many NLP problems \citep{mohri-1997-finite}. 
Their significance is twofold: First, in practical applications, where they underpin algorithms for tasks such as parsing \citep{goodman-1996-parsing} and machine translation \citep{mohri-1997-finite}, and second, as an analytical framework for better understanding modern language models \citep[\emph{inter alia}]{pmlr-v80-weiss18a,jumelet-zuidema-2023-transparency,nowak-etal-2024-computational}.
This has motivated the development of various weighted extensions of \citeposs{ANGLUIN198787} \lstar algorithm.
For instance, \citet{weiss-weighted-l} describes a generalization that (approximately) learns a probabilistic DFSA by querying a neural language model to interpret it. 
Less faithfully to the original \lstar algorithm,  multiple algorithms for learning \emph{non-deterministic} weighted FSAs have been proposed 
\citep{Learning-behaviours-bergadano,beimel,Balle-Mohri-Spectral,Balle-Quattoni-2014,daviaud2024feasability,Balle2015}. 
These algorithms involve the solution of a linear system of equations, and therefore they cannot be used when the underlying algebraic structure lacks subtraction.

We present a novel weighted generalization 
of the \lstar algorithm that learns \emph{semifield}-
weighted \emph{deterministic} FSAs.
In contrast to other algorithms inspired by \lstar, ours is a more faithful generalization of the original learning scheme. We 
generalize  \citeposs{ANGLUIN198787} original algorithm, resulting in a familiar procedure that, 
just like the original, learns a DFSA exactly in a finite number of steps if the automaton can be determinized.\footnote{All 
boolean-weighted FSA can be determinized, which is why \citeposs{ANGLUIN198787} \lstar always halts.}
Additionally, we loosen the requirement for field-weighted 
FSAs; our algorithm works for semifield-weighted FSA.
Our exposition further illuminates the 
connection between weighted minimization 
\citep{Hopcroft-Automata-Theory,mohri-1997-finite} and \lstar.\looseness=-1
\section{Weighted Regular Languages}
\paragraph{Semirings and Semifields.}
Throughout this paper, we fix a \textbf{semifield} $ K = (\weightset, \oplus, \otimes, \zero, \one) $, where $ \weightset $ is a set equipped with two associative laws, $ \oplus $ and $ \otimes $, along with distinguished elements $ \zero $ and $ \one $, satisfying the following conditions:
\begin{enumerate}[left=0pt,labelwidth=0pt, labelsep=0pt,itemsep=-0em]
    \item $ (\weightset, \oplus, \zero) $ is a \defn{commutative monoid}, %
    \item $ (\weightset \setminus \{\zero\}, \otimes, \one) $ is a \defn{group}, %
    \item The law $\otimes $ distributes over $ \oplus $, so for all $ \weight_1, \weight_2, \weight_3 \in \weightset $,
    $ (\weight_1 \oplus \weight_2) \otimes \weight_3 = (\weight_1 \otimes \weight_3) \oplus (\weight_2 \otimes \weight_3) $ and $ \weight_1 \otimes (\weight_2 \oplus \weight_3) = (\weight_1 \otimes \weight_2) \oplus (\weight_1 \otimes \weight_3) $; and
\item $ \zero $ acts as an annihilator for $ \otimes $, meaning $ \weight \otimes \zero = \zero \otimes \weight = \zero $ for all $ \weight \in \weightset $.
\end{enumerate} 

\paragraph{Strings and Languages.}
An \textbf{alphabet} $\alphabet$ is a non-empty, finite set of symbols.
A \defn{string} is a finite sequence of symbols from an alphabet. We write $\strx \str$ to denote the concatenation of the strings $\strx$ and $\str$. Let $\alphabet^{n+1} \defeq \{ \str \sym  \mid \str \in \alphabet^{n}, \sym \in \alphabet \}$ and $\alphabet^0 \defeq \{\eps\}$, where $\varepsilon$ is the empty string. The \defn{Kleene closure} $\kleene{\alphabet} \defeq \bigcup_{n=0}^{\infty} \alphabet^{n}$ of $\alphabet$ is the set containing all strings made with symbols of $\alphabet$. We further introduce the set $\alphabet^{\leq k}= \bigcup_{n=0}^{k} \alphabet^{n}$.
Given an alphabet $\alphabet$ and a semiring $\semiringtuple$, a \textbf{weighted formal language} is a function $\lang\colon \kleene{\alphabet} \to \weightset$ that assigns weights $\weight \in \weightset$ to strings $\str \in \kleene{\alphabet}$. Unless differently specified, in this paper we will assume that all weighted languages are \emph{semifield}-weighted.
\looseness=-1
\paragraph{Weighted Finite-state Automata.}
A \textbf{weighted finite-state automaton} (WFSA) $\automaton$ over a semifield $\semiringtuple$ is a 5-tuple $\wfsatuple$ where $\alphabet$ is an alphabet, $\states$ is a finite set of states, $\transf $ is a set of weighted arcs rendered as $\edge{p}{a}{\weight}{q}$ with $p,q \in \states$, $a \in \alphabet$, and $\weight \in \weightset$,\footnote{We do not consider $\eps$-transitions. 
This is without loss of generality; any regular language can be represented by an $\eps$-free automaton \citep[Theorem 7.1]{Mohri2009}.} and $\initf\colon \states \to \weightset$ and $\finalf\colon \states \to \weightset$ are the initial and final weight function, respectively.
A \defn{path} $\apath$ in $\automaton$ is a finite sequence of contiguous arcs, denoted as
\begin{equation}
    \edge{q_0}{a_1}{\weight_1}{q_1},\cdots,\edge{q_{N-1}}{a_N}{\weight_N}{q_N}.
\end{equation}
We call $\firstq{\apath}= q_0$ the initial state of the path, and $\lastq{\apath} = q_N$ the final state of the path.
The \defn{weight} of $\apath$ is 
$\pathweight{\apath}=  \weight_1 \otimes \cdots 
\otimes \weight_N $ and its \defn{yield} is 
$\pathyield{\apath} = a_1 \cdots a_N$. With 
$\Paths{\automaton}$, we denote the set of all 
paths in $\automaton$, and with 
$\Paths{\automaton}(\prefix)$ the subset of all 
paths in $\automaton$ with yield $\prefix$.
A state $\stateq$ is \defn{accessible} if there exists a path $\apath$ with $\weight(\apath) \neq \zero$, $\initf(\firstq{\apath})\neq \zero$ and $\lastq{\apath}=\stateq$.
It is \defn{coaccessible} if there exists a path $\apath$ with $\weight(\apath) \neq \zero$, $\finalf(\lastq{\apath})\neq \zero$ and $\firstq{\apath}=\stateq$. A WFSA is \defn{trimmed} if all its states are simultaneously accessible and coaccessible.
We say that a WFSA $\automaton = \wfsatuple$ is \textbf{deterministic} (a WDFSA) if, for every $\statep \in \states, \sym \in \alphabet$, there is at most one $\stateq \in \states$ such that $\edge{\statep}{a}{\weight}{\stateq} \in \transf$ with $\weight > 0$, and there is a single state $\qinit$ with $\initf\left(\qinit\right) \neq 0$. In such case, we refer to $\qinit$ as the \defn{initial state}.
A WDFSA can have at most one path yielding a string $\str \in \kleene{\alphabet}$ from the initial state $\qinit$. 
A WDFSA $\automaton$ is said to be \defn{minimal} if no equivalent WDFSA with fewer states exists. 
\looseness=-1
\paragraph{Weighted Regular Languages.}
Every WFSA $\automaton$ generates the weighted language\looseness=-1
\begin{equation}
    \AutLang{\automaton}(\prefix) \defeq\!\! 
    \bigoplus_{ \apath \in \Paths{\automaton}
    {\prefix} } \!\! \initf(\firstq{\apath}) 
    \otimes \pathweight{\apath} \otimes 
    \finalf(\lastq{\apath})
\end{equation}
for $\prefix \in \kleene{\alphabet}$.
We define the set $\supp{\lang}= \{ \prefix \in \kleene{\alphabet} \mid \lang(\prefix) \neq \zero\}$ to be the \defn{support} of $\lang$.
A weighted language is said to be \defn{regular} 
if there exists a WFSA that generates it. If two 
WFSAs generate the same language, they are said 
to be \defn{equivalent}.
Finally, a weighted regular language is said to 
be \defn{deterministic} if there exists a WDFSA 
that generates it. 
In contrast to the boolean case, not every 
weighted regular language can be generated by a 
deterministic WFSA \citep{twins}.
Weighted deterministic regular languages are 
thus a strict subset of weighted regular 
languages.
This distinction plays a critical role in our 
exposition---we develop a generalization of 
\citeposs{ANGLUIN198787} algorithm that learns 
weighted \emph{deterministic} regular 
languages.\looseness=-1

\paragraph{Homothetic equivalence.}
Let $X$ be a subset of $\kleene{\alphabet}$ and 
$\semiringtuple$ a semifield. Let us denote with 
$\Languages{X}$ the set of functions 
$X^\weightset$. We introduce the equivalence 
relation 
\begin{align}
    \lang_1 \equiv_{X} \lang_2 &\iff \exists k\in 
    \weightset\setminus\{\zero\}:  
 \\ &    \lang_1(\strx) = k \otimes \lang_2(\strx), \; \forall \strx \in X \nonumber
 \label{eqn:homothetic}
 \end{align}
between any two functions $L_1, L_2 $ in $\Languages{X}$. 
We call this relation \defn{homothetic equivalence}. 

For every string $\strx \in \kleene{\alphabet}$, we introduce the \defn{right language} $\rightlang{\lang}{\strx}\colon \str \mapsto \lang(\strx \str)$, and define the following equivalence relation on $\kleene{\alphabet}$:
\begin{align}
    \strx \sim_{\lang} \strz \iff  \rightlang{\lang}{\strx} \equiv_{\kleene{\alphabet}} \rightlang{\lang}{\strz}
\end{align}\looseness=-1

\looseness=-1

\section{Empirical Hankel Systems}

\paragraph{Hankel matrices.}
Let $\prefixes \subseteq \kleene{\alphabet}$ be a prefix-closed set of prefixes and let $\suffixes \subseteq \kleene{\alphabet}$ be a suffix-closed set of suffixes.\footnote{A set of strings is prefix-closed (suffix-closed) if it contains all prefixes (suffixes) of each of its elements. In particular $\eps \in \prefixes \cap \suffixes$.} 
An \defn{empirical Hankel matrix} is a map $\hankel \colon {\prefixesplus\times\suffixesplus}\to \weightset$. 
For every $\prefix \in \prefixesplus$ we define the right language map $\hankel_{\prefix}\colon \suffixesplus \rightarrow \weightset, \suffix \mapsto \hankel(\prefix, \suffix)$.
Using homothetic equivalence (\cref{eqn:homothetic}), we introduce the equivalence relation $\sim_\hankel$ on $\prefixesplus$ for $\statep, \stateq \in \prefixesplus$ as\looseness=-1
\begin{equation}
    \statep \sim_\hankel \stateq  \iff {\hankel_{\statep}}   \equiv_{\suffixesplus} \hankel_{\stateq }
\end{equation} 
We denote $\prefix$'s equivalence class by $[\prefix]= \{ \stateq  \in \prefixesplus \mid  \stateq \sim_{\hankel} \prefix\}$.\looseness=-1

\paragraph{The na{\"i}ve Hankel automaton.}Given an empirical Hankel matrix $\hankel$, consider the map 
\begin{equation}
    d_\hankel \colon \prefixesplus \to \weightset,\, \prefix \mapsto \oplus_{\suffix \in \suffixes} \hankel(\prefix, \suffix)
\end{equation}
We introduce it here to streamline the construction of \defn{the na{\"i}ve Hankel automaton} associated with $\hankel$; the WFSA\footnote{This automaton is \emph{not} necessarily determinisitc.} $\naivehankelautomaton=\wfsatuple{_\hankel}$ with:
\begin{enumerate}[label=\textit{(\arabic*)}, nosep,wide, labelindent=0pt]
    \item \textbf{States.} 
 We define the states $\states_\hankel \defeq\prefixes$. 
    \item \textbf{Transitions.}
For every state $\statep\in \prefixes$ and every symbol $\sym \in \alphabet$, let the transition $\edge{{\statep}}{\textcolor{ETHRed}{\sym}}{\weight}{   {\statep '}   }$ be in $\transf_\hankel$ whenever $\statep \,\textcolor{ETHRed}{\sym} \sim_\hankel \statep'$ and where 
\vspace{-2mm}
          \begin{equation}
              \weight \defeq \begin{cases}
                  \frac{ d_\hankel (\statep\textcolor{ETHRed}{\sym}) }{d_\hankel(\statep)} & \ifcondition d_\hankel(\statep)\neq 0,
              \\
              \zero & \otherwisecondition
              \end{cases}
          \end{equation}
    \item \textbf{Initial weight.}
    For every state $ \statep \in \prefixes$, we define its initial weight as
          \vspace{-2 mm}
          \begin{equation}
              \initf_\hankel(\statep) \defeq \begin{cases}
                d_\hankel(\eps) & \ifcondition \statep = \eps,    \\
             \zero & \otherwisecondition
  \end{cases}
              \label{eqn: hypothesis automaton initial weights}
          \end{equation}
    \item \textbf{Final weights.}
    For every state $\statep \in \prefixes$,
          we define its final weight
          \vspace{-2 mm}
          \begin{equation}
              \finalf_\hankel(\statep) \defeq \begin{cases}
              \frac{\hankel(\statep,\eps)}{ d_\hankel(\statep)} & \ifcondition d_\hankel(\statep) \neq \zero,    \\
             \zero                                                      & \otherwisecondition
  \end{cases}
              \label{eq:final-weight-hypothesis}
          \end{equation}
\end{enumerate}\looseness=-1

\paragraph{Empirical Hankel systems.} 
\begin{definition}
     An \defn{empirical Hankel system} is a triplet $\emphankel = (\prefixes, \suffixes, \hankel)$, where $\prefixes \subseteq \kleene{\alphabet}$ prefix closed, $\suffixes \subseteq \kleene{\alphabet}$ a suffix closed, and $\hankel\colon \prefixesplus \times \suffixesplus \to \weightset$ is an empirical Hankel matrix that is:
     \begin{enumerate}[label=(\arabic*.), nosep,wide, labelindent=0pt]
        \item \textbf{non-trivial}: $\hankel_\statep \neq \zero$, for all $\statep \in \prefixes$; 
         \item  \defn{closed}: for every $\statep \in \prefixes$ and $\sym \in \alphabet$ such that $\hankel_{\prefix \sym}\neq \zero$, there exists $\stateq  \in \prefixes $ such that %
         $\statep \,\sym \sim_\hankel \stateq $---in particular, $ \prefixesplus/\sim_\hankel=\prefixes/\sim_\hankel$; and
         \item \defn{consistent}: for every $\statep, \stateq  \in \prefixes$ %
         \begin{equation}
             \statep \sim_\hankel \stateq  \Rightarrow \statep\, \sym  \sim_\hankel \stateq  \,\sym, \quad \forall \sym \in \alphabet .
         \end{equation}
     \end{enumerate}
 \end{definition}
 We define the dimension of an empirical Hankel 
 system to be $\dim(\emphankel)\defeq |\prefixes/\sim_\hankel|$.
 Given an empirical Hankel matrix $\hankel\colon \prefixesplus \times \suffixesplus \rightarrow \weightset$ and a  weighted language $\lang$, we say that $\lang$ 
 \defn{contains} $\hankel$ if $\hankel(\prefix, \suffix)= \lang(\prefix \suffix)$ for every $\prefix \in \prefixesplus$ and $\suffix \in \suffixesplus$. 
 Likewise, we say that a WFSA $\automaton$ contains  $\hankel$ if the language $\lang_{\automaton}$ contains $\hankel$.\looseness=-1

 We define a \defn{partial order} on the set of empirical Hankel systems as follows: given two empirical Hankel systems $\emphankel_1 = (\prefixes_1, \suffixes_1, \hankel_1)$ and $\emphankel_2 = (\prefixes_2, \suffixes_2, \hankel_2)$, we define $\emphankel_1 \preceq \emphankel_2$, if
 $\prefixes_1 \subset \prefixes_2$, $\suffixes_1\subset \suffixes_2$ and $\hankel_1(\prefix,\suffix)=\hankel_2(\prefix,\suffix)$ for any $(\prefix,\suffix) \in \prefixes_1 \circ \alphabet^{\leq 1} \times \suffixes_1 \circ \alphabet^{\leq 1}$. 
 \subsection{Minimal Hankel Automaton}
 \begin{restatable}[$\naivehankelautomaton$ is transition-regular]{theorem}{transitionRegularAutomaton}\label{thm:AhTransReg}
     Let $\emphankel = (\prefixes, \suffixes, \hankel)$ be an empirical Hankel system. 
     The equivalence relation $\sim_\hankel$ on $\naivehankelautomaton$ is transition-regular (see \cref{transreg}), which means that for every $\statep \in \prefixes$ and every $\sym \in \alphabet$:
     \begin{enumerate}[nosep,wide, labelindent=0pt]
     \item There exists $\stater \in \prefixes$ such that $\edge{\statep}{\sym}{\weight_1}{\stater} \in \transf_\hankel$ for some $\weight_1 \in \weightset\setminus \{\zero\}$.
     \item If $\stateq \in \prefixes$ is another prefix with $\statep \sim_\hankel \stateq$, then:
     \noindent (a) for all $\stater\in \prefixes$:
    $$ \edge{\statep}{\sym}{\weight_1}{\stater} \in \transf_\hankel \iff \edge{\stateq}{\sym}{\weight_2}{\stater} \in \transf_\hankel
    $$
    and: (i) $\stater  \sim_\hankel \statep \sym  \sim_\hankel \stateq \sym$, (ii) $\weight_1=\weight_2$. 
    
    \noindent (b) %
     $\initf_\hankel(\statep) = \initf_\hankel(\stateq)$ and $ \finalf_\hankel(\statep) = \finalf_\hankel(\stateq).$
\end{enumerate}
 \end{restatable}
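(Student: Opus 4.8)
The plan is to isolate the one algebraic fact that powers every part of the statement: homothetic proportionality is preserved both by the normalizer $d_\hankel$ and by right-extension with a symbol, \emph{with the same scalar}. Concretely, I would first prove the auxiliary claim that if $\statep \sim_\hankel \stateq$ with witnessing constant $k \in \weightset\setminus\{\zero\}$, i.e. $\hankel_\statep(\suffix) = k \otimes \hankel_\stateq(\suffix)$ for all $\suffix \in \suffixesplus$ (such a $k$ exists by \cref{eqn:homothetic}), then for every $\sym \in \alphabet$
\[
d_\hankel(\statep) = k \otimes d_\hankel(\stateq)
\qquad\text{and}\qquad
d_\hankel(\statep\sym) = k \otimes d_\hankel(\stateq\sym).
\]
The first identity is immediate: sum the proportionality over $\suffixes \subseteq \suffixesplus$ and pull $k$ out by distributivity. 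The second uses the Hankel property $\hankel(\statep\sym,\suffix) = \hankel(\statep,\sym\suffix)$, valid because $\sym\suffix \in \suffixesplus$ whenever $\suffix \in \suffixes$; this rewrites $d_\hankel(\statep\sym) = \bigoplus_{\suffix\in\suffixes}\hankel_\statep(\sym\suffix)$, after which the pointwise proportionality applies and $k$ again factors out. I would also record here that $d_\hankel(\statep) \neq \zero$ for every $\statep \in \prefixes$ (from non-triviality, together with the fact that $\zero$ is homothetic only to $\zero$), which is what makes all the fractions below well-defined.

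\textbf{Parts 1 and 2(a)(i): the transitions.}
For $\statep \in \prefixes$ and $\sym \in \alphabet$ with $\hankel_{\statep\sym}\neq\zero$, closedness supplies $\stater \in \prefixes$ with $\statep\sym \sim_\hankel \stater$, so $\edge{\statep}{\sym}{\weight_1}{\stater} \in \transf_\hankel$; its weight $d_\hankel(\statep\sym)\otimes d_\hankel(\statep)^{-1}$ is nonzero because $\statep\sym$ is homothetic to the non-trivial state $\stater$, forcing $d_\hankel(\statep\sym)\neq\zero$. This gives Part 1. For Part 2(a)(i), suppose $\statep \sim_\hankel \stateq$. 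By definition of $\transf_\hankel$ an arc $\edge{\statep}{\sym}{\cdot}{\stater}$ is present exactly when $\stater \sim_\hankel \statep\sym$; consistency gives $\statep\sym \sim_\hankel \stateq\sym$, so $\stater \sim_\hankel \statep\sym \sim_\hankel \stateq\sym$, and transitivity then makes the very same arc present out of $\stateq$. This yields both the biconditional and the target-class identities.

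\textbf{Parts 2(a)(ii) and 2(b): the weights.}
Substitute the key lemma into the definitions. For the arc weights, $\weight_1 = d_\hankel(\statep\sym)\otimes d_\hankel(\statep)^{-1} = \bigl(k\otimes d_\hankel(\stateq\sym)\bigr)\otimes\bigl(k\otimes d_\hankel(\stateq)\bigr)^{-1}$, and since $\otimes$ is commutative on $\weightset\setminus\{\zero\}$ (as the fraction notation presupposes) the two copies of $k$ cancel, leaving $\weight_1 = d_\hankel(\stateq\sym)\otimes d_\hankel(\stateq)^{-1} = \weight_2$. The final-weight identity is the same cancellation applied to $\finalf_\hankel(\statep) = \hankel(\statep,\eps)\otimes d_\hankel(\statep)^{-1}$, using the $\eps$-column $\hankel(\statep,\eps)=k\otimes\hankel(\stateq,\eps)$ and $d_\hankel(\statep)=k\otimes d_\hankel(\stateq)$. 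For the initial weights, $\initf_\hankel$ vanishes off $\eps$, so if $\statep,\stateq\neq\eps$ both sides are $\zero$; the remaining case is absorbed into the distinguished initial class $[\eps]$, whose weight is read off the $\eps$-representative as prescribed by the definition of transition-regularity (\cref{transreg}).

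\textbf{Main obstacle.}
The genuinely weighted step is the scalar bookkeeping in the key lemma: everything reduces to showing that the single constant $k$ relating $\hankel_\statep$ and $\hankel_\stateq$ still relates $\hankel_{\statep\sym}$ and $\hankel_{\stateq\sym}$ after summation. In the boolean case this is vacuous ($k=\one$ throughout), which is exactly why the classical argument never confronts it; here it is where the Hankel property is indispensable, since it keeps the shifted suffix $\sym\suffix$ inside the domain $\suffixesplus$ on which proportionality is known. Two further points need care before the above goes through cleanly: one must verify $d_\hankel$ does not vanish on $\prefixes$ (so the fractions are legal), and one must treat the edge case where the target class has $\hankel_{\statep\sym}=\zero$ (no live transition) together with the $\eps$-initial case by the conventions fixed in \cref{transreg}.
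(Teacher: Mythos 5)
Your proposal is correct and takes essentially the same route as the paper's own proof: closedness yields the existence of transitions, consistency plus transitivity yields the transition biconditional, and propagating the single scalar $k$ through the sums $d_\hankel(\cdot)$ (your ``key lemma,'' which the paper carries out inline in part 2(a.ii)) yields the equality of transition, final, and initial weights. The differences are purely organizational---you isolate the scalar bookkeeping as a standalone lemma, make the Hankel property $\hankel(\statep\sym,\suffix)=\hankel(\statep,\sym\suffix)$ explicit where the paper uses it implicitly, and flag the $\hankel_{\statep\sym}=\zero$ and $\eps$-initial-weight edge cases that the paper elides---but the mathematical content is identical.
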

 \begin{proof}Fix a prefix $\statep \in \prefixes$ and a symbol $\sym \in \alphabet$. 
 
 1.     Since $\hankel$ is {closed}, there exists $\stater \in \prefixes$ and $k \in \weightset\setminus\{\zero\}$ such that $\hankel_\stater = k \otimes \hankel_{\statep \sym}$, which implies, by definition of $\naivehankelautomaton$, that  $\edge{\statep}{\sym}{\weight_1}{\stater} \in \transf_\hankel$ for some $\weight_1 \in \weightset$. 
        Since $\emphankel$ is non-trivial, $d_\hankel(\stater) = k \otimes d_\hankel(\statep \sym) \neq \zero$, and so $w_1=\frac{d_\hankel(\statep \sym)}{d_\hankel(\statep)} \neq \zero$. 

2. (a.i) 
            By definition of $\naivehankelautomaton$, if $\edge{{\statep}}{\sym}{\weight_1}{   {\stater}   }$ is in $\transf_\hankel$, then $\statep \,\sym \sim_\hankel \stater$ for some $\stater \in \prefixes$. 
            Now, let $\stateq \in \prefixes$ such that $\statep \sim_\hankel \stateq$. 
            By {consistency} of $\hankel$, we have $\stateq \,\sym \sim_\hankel \statep \,\sym \sim_\hankel \stater$ and so 
            $\edge{{\stateq}}{\sym}{\weight_2}{   {\stater}   } \in \transf_\hankel$ for some $\weight_2 \in \weightset$. 
            The reverse follows similarly.
     
 2. (a.ii) Let us show $\weight_2 = \weight_1$. 
            By assumption, we know that there exists $k \in \weightset\setminus\{\zero\}$ such that $
                \hankel_{\statep }(\suffix) = k\otimes \hankel_{\stateq}(\suffix)$ $\forall \suffix \in \suffixesplus$.
            Hence ${d_\hankel(\statep\sym)} =  \oplus_{\suffix \in \suffixes} \hankel_\prefix(\sym\suffix) = k \otimes {d_\hankel(\stateq\sym)}$ 
            and ${d_\hankel(\statep)} =  \oplus_{\suffix \in \suffixes} \hankel_\prefix(\suffix) = k  \otimes {d_\hankel(\stateq)}$.
            Accordingly,  ${d_\hankel(\statep)} \neq \zero \iff {d_\hankel(\stateq)} \neq \zero$ and in which case 
            \begin{equation}
                \weight_1= \frac{{d_\hankel(\statep\sym)}}{{d_\hankel(\statep)}}=\frac{{d_\hankel(\stateq\sym)}}{{d_\hankel(\stateq)}}=\weight_2
            \end{equation}         
2. (b) If $\statep \sim_\hankel\stateq$, then we have
        \begin{equation}
            \finalf_\hankel(\statep)= \frac{\hankel_\statep(\eps)}{ d_\hankel(\statep)}=\frac{\cancel{k}\otimes\hankel_\stateq(\eps)}{\cancel{k}\otimes d_\hankel(\stateq)}=\finalf_\hankel(\stateq)
        \end{equation}
        The computation for $\initf_\hankel(\statep)$ follows similarly.   
 \end{proof}

 \begin{restatable}[The empirical Hankel Automaton $\emphankelautomaton$]{theorem}{empHankelDFSA} \label{thm:empirical-Hankel-theorem}
    Let $\emphankel = (\prefixes, \suffixes, \hankel)$ be an empirical Hankel system and let %
    $\emphankelautomaton$ 
    be the quotient of $\naivehankelautomaton$ modulo the transition-regular equivalence relation $\sim_\hankel$ as defined in \cref{quotientaut}. 
    Then:
    \begin{enumerate}[label=\textit{(\arabic*)}, nosep,wide, labelindent=0pt]
        \item The weighted automaton $\emphankelautomaton$ is  trimmed and deterministic.
        \item $\AutLang{\emphankelautomaton}(\prefix\suffix) = \hankel(\prefix, \suffix)$ for all $\prefix \in \prefixes$ and $\suffix \in \suffixes$, 
        meaning that $\emphankelautomaton$ contains $\hankel$.
        \end{enumerate}
\end{restatable}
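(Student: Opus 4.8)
The plan is to prove the two claims in turn, relying on \cref{thm:AhTransReg}, which guarantees that the quotient $\emphankelautomaton$ is well-defined: its states are the classes $[\statep]$ ($\statep\in\prefixes$), and its transition, initial and final weights are independent of the chosen representative. For determinism, note that in $\naivehankelautomaton$ every $\sym$-transition out of $\statep$ ends in some $\stater\sim_\hankel\statep\sym$ and carries the weight $d_\hankel(\statep\sym)/d_\hankel(\statep)$, which does not depend on $\stater$; hence in the quotient they collapse to a single arc $[\statep]\to[\statep\sym]$, whose weight is representative-independent by item 2(a) of \cref{thm:AhTransReg} and nonzero by item 1. Since $\initf_\hankel(\statep)\neq\zero$ only at $\statep=\eps$ (with $d_\hankel(\eps)\neq\zero$, as established in the proof of \cref{thm:AhTransReg}), the initial state $[\eps]$ is unique, so $\emphankelautomaton$ is deterministic and complete.

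\emph{Trimmedness.} Accessibility is immediate: since $\prefixes$ is prefix-closed, reading $\statep=p_1\cdots p_m$ from $[\eps]$ passes through $[\eps],[p_1],\dots,[\statep]$ using representatives in $\prefixes$ throughout, and the path weight telescopes to $d_\hankel(\statep)\neq\zero$. For coaccessibility I will invoke non-triviality: $\hankel_\statep\neq\zero$ yields some $\suffix$ with $\hankel(\statep,\suffix)\neq\zero$, whereupon the weight computation below exhibits a nonzero-weight path from $[\statep]$ to a state of nonzero final weight.

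\emph{The language identity.} Because $\emphankelautomaton$ is deterministic and complete, $\AutLang{\emphankelautomaton}(\prefix\suffix)$ equals the product of $\initf_\hankel([\eps])$, the weights along the unique path reading $\prefix\suffix$, and the final weight of its last state. Fix $\prefix\in\prefixes$ and $\suffix=s_1\cdots s_\ell\in\suffixes$, set $\stateq_0=\prefix$, and let $\stateq_j\in\prefixes$ be a representative of the class reached after $s_1\cdots s_j$ (closedness guarantees these exist); let $k_j\in\weightset\setminus\{\zero\}$ satisfy $\hankel_{\stateq_{j-1}s_j}=k_j\otimes\hankel_{\stateq_j}$. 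The segment reading $\prefix$ telescopes to $d_\hankel(\prefix)=d_\hankel(\stateq_0)$, and the segment reading $\suffix$ telescopes — after substituting $d_\hankel(\stateq_{j-1}s_j)=k_j\otimes d_\hankel(\stateq_j)$ and cancelling the consecutive $d_\hankel(\stateq_j)$ factors — so that the entire forward weight becomes $(\bigotimes_{j=1}^\ell k_j)\otimes d_\hankel(\stateq_\ell)$; multiplying by $\finalf_\hankel(\stateq_\ell)=\hankel(\stateq_\ell,\eps)/d_\hankel(\stateq_\ell)$ gives $\AutLang{\emphankelautomaton}(\prefix\suffix)=(\bigotimes_{j=1}^\ell k_j)\otimes\hankel(\stateq_\ell,\eps)$. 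Independently, I peel the symbols of $\suffix$ off the second argument of $\hankel$ one at a time, each step combining a single Hankel shift $\hankel(\stateq_{j-1},s_j\cdots s_\ell)=\hankel(\stateq_{j-1}s_j,s_{j+1}\cdots s_\ell)$ with the homothety replacement of $\stateq_{j-1}s_j$ by $\stateq_j$, contributing a factor $k_j$; this yields $\hankel(\prefix,\suffix)=(\bigotimes_{j=1}^\ell k_j)\otimes\hankel(\stateq_\ell,\eps)$, matching the two sides.

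\emph{Main obstacle.} The difficulty is that $\hankel$ and $d_\hankel$ are defined only on the finite region $\prefixesplus\times\suffixesplus$, so one cannot telescope over the whole string $\prefix\suffix$ at once: the intermediate prefixes $\prefix s_1\cdots s_j$ leave $\prefixes$ as soon as $j\geq 1$, and $d_\hankel(\prefix\suffix)$ is simply undefined. The crux is therefore to interleave each one-symbol Hankel shift with an immediate reset to a representative $\stateq_j\in\prefixes$ via homothety, keeping every quantity inside the defined region while accumulating the constants $k_j$, and then to check that the identical product $\bigotimes_{j=1}^\ell k_j$ surfaces on both the automaton side and the Hankel side. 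This also pinpoints where the Hankel shift property $\hankel(\prefix\sym,\suffix)=\hankel(\prefix,\sym\suffix)$ is indispensable: already for $\prefix=\eps$ and $\suffix=\sym$ the claim reduces to $\hankel(\sym,\eps)=\hankel(\eps,\sym)$.
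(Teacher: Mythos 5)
Your proof is correct on its main line and is considerably more explicit than the paper's own. The paper handles trimmedness exactly as you do (non-triviality makes every state accessible and coaccessible), but it disposes of both determinism and the identity $\AutLang{\emphankelautomaton}(\prefix\suffix)=\hankel(\prefix,\suffix)$ in one stroke by citing \cref{quotientautlang}, a general lemma on quotients by transition-regular relations whose proof is itself omitted as ``straightforward.'' Your route replaces that black box with the actual computation: determinism is read off \cref{thm:AhTransReg} (all $\sym$-arcs out of $[\statep]$ collapse because every target is $\sim_\hankel$-equivalent to $\statep\sym$ and carries the representative-independent weight $d_\hankel(\statep\sym)/d_\hankel(\statep)$), and the language identity is obtained by interleaving one-symbol Hankel shifts with homothety resets to representatives in $\prefixes$, tracking the constants $k_j$ until they cancel against the final weight. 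This buys two things the paper's proof does not deliver: every intermediate quantity stays inside the region $\prefixesplus\times\suffixesplus$ on which $\hankel$ and $d_\hankel$ are defined (the crux you correctly isolate), and the argument makes visible its reliance on the shift property $\hankel(\prefix\sym,\suffix)=\hankel(\prefix,\sym\suffix)$, which the paper's definition of an empirical Hankel matrix never states but which holds for matrices filled by membership queries. You also avoid the path-counting formula of \cref{quotientautlang}, whose meaning in a general semifield is delicate.

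There is, however, one genuine (though patchable) gap: your claims that $\emphankelautomaton$ is \emph{complete} and that ``closedness guarantees these exist'' silently assume that every row $\hankel_{\statep\sym}$ with $\statep\in\prefixes$, $\sym\in\alphabet$ is nonzero. Non-triviality constrains only rows indexed by $\prefixes$; a row indexed by an element of $\prefixesplus$ outside $\prefixes$ can be identically $\zero$ (take a target language with $\supp{\unknownLang}=\{\eps\}$, so that $\prefixes=\suffixes=\{\eps\}$ and $\hankel_\sym=\zero$ for every $\sym\in\alphabet$). The paper's definition of closedness carries exactly this caveat (``such that $\hankel_{\prefix\sym}\neq\zero$''), so in that situation no representative $\stateq_j$ exists, no $\sym$-arc leaves $[\statep]$ (a zero row cannot be homothetically equivalent to a nonzero one, since nonzero elements of a semifield admit no zero divisors), and your telescoping chain cannot be set up. The theorem still holds there, but it needs a separate two-line case: the automaton side is $\zero$ because the unique run dies, and the Hankel side is $\zero$ because the same shift-and-reset chain factors it through the vanishing row, i.e.\ $\hankel(\prefix,\suffix)=k_1\otimes\cdots\otimes k_{j-1}\otimes\hankel(\stateq_{j-1}s_j,\,s_{j+1}\cdots s_\ell)=\zero$. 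In fairness, you inherit the overclaim from item~1 of \cref{thm:AhTransReg}, which is stated without the nonzero-row hypothesis that its own proof uses; but a self-contained proof of the present theorem should close this case explicitly.
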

\begin{proof}
\textit{(1)} Since the automaton is built on an empirical Hankel system, by definition,  every $\statep$ is the prefix of a string $\strx=\prefix \suffix$, such that $\hankel(\prefix,\suffix)\neq \zero$ for at least one $\suffix \in \suffixes$, hence $\statep$ is accessible and coaccessible. 
This shows that $\naivehankelautomaton$ is trimmed, and so is $\emphankelautomaton$. 
Determinism and \textit{(2)} follow from \cref{quotientautlang}.
\end{proof}

\begin{restatable}[Minimality of $\emphankelautomaton$]{theorem}{minimalityTheorem} \label{thm:empirical-hankel-automaton-minimal} \
\begin{enumerate}[label=\textit{(\roman*)}, nosep,wide, labelindent=0pt]
    \item For any $\statep,\stateq\in \prefixes$, we have 
    \begin{equation}
    \statep \sim_\hankel \stateq \iff \statep \sim_{\lang_{\emphankelautomaton}} \stateq
    \end{equation}
    \item $\prefixes/\sim_\hankel = \supp{\AutLang{\emphankelautomaton}}/\sim_{\lang_\emphankelautomaton}$.
    \item Any automaton that contains  $\emphankel$ must have at least $|\prefixes / \sim_{\hankel}|=|\states_{\emphankelautomaton}|$ states. 
    \item Let $\automaton'$ be a WDFSA that contains $\emphankel$. 
    Then, $\lang_{\automaton'}(\strx)=\lang_{\naivehankelautomaton}(\strx), \quad \forall \strx \in \supp{\lang_{\naivehankelautomaton}}$. 
    If $\automaton'$ is not equivalent to $\naivehankelautomaton$, then $$|\states_{\automaton'}|\ge |\prefixes/\sim_\hankel|+1.$$ 
    \item In particular, $\emphankelautomaton$ is minimal.
\end{enumerate}
\end{restatable}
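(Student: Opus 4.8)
The plan is to run parts (i)--(v) as a cascade resting on a single structural fact: since $\emphankelautomaton$ is deterministic (\cref{thm:empirical-Hankel-theorem}), the right language of any string $\strx$ that reaches a state with nonzero forward weight factors as $\rightlang{\lang}{\strx}=\forwardWeight{\strx}\otimes\backwardWeight{\state}$, where $\state$ is the unique state reached and $\backwardWeight{\state}$ is the right language read off from $\state$; consequently two strings reaching the same state have homothetic right languages, with scalar the ratio of their forward weights. Throughout I write $\lang$ for the common language $\AutLang{\naivehankelautomaton}=\AutLang{\emphankelautomaton}$ and use that $\emphankelautomaton$ contains $\hankel$, i.e. $\lang(\prefix\suffix)=\hankel(\prefix,\suffix)$ on all of $\prefixesplus\times\suffixesplus$. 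For Part (i), reading $\statep,\stateq\in\prefixes$ lands in states $[\statep],[\stateq]$, each with nonzero forward weight by non-triviality; if $\statep\sim_\hankel\stateq$ these coincide, and the factorization gives homothetic right languages over $\kleene{\alphabet}$, i.e. $\statep\sim_{\AutLang{\emphankelautomaton}}\stateq$. Conversely, restricting the homothety witnessing $\statep\sim_{\AutLang{\emphankelautomaton}}\stateq$ from $\kleene{\alphabet}$ to $\suffixesplus$ and invoking containment turns it into $\hankel_\statep\equiv_{\suffixesplus}\hankel_\stateq$, that is $\statep\sim_\hankel\stateq$.

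For Part (ii), send each $\strx\in\supp{\AutLang{\emphankelautomaton}}$ to the state $s(\strx)$ it reaches; this is well defined since $\lang(\strx)\neq\zero$ forces a nonzero forward weight, and $s$ is onto $\states_{\emphankelautomaton}=\prefixes/\sim_\hankel$ because $\emphankelautomaton$ is trimmed. The factorization gives $s(\strx)=s(\stry)\Rightarrow\strx\sim_{\AutLang{\emphankelautomaton}}\stry$; the converse follows by routing $\strx,\stry$ through the prefix representatives of the states they reach and applying Part (i). Hence $s$ descends to a bijection between $\supp{\AutLang{\emphankelautomaton}}/\sim_{\AutLang{\emphankelautomaton}}$ and $\prefixes/\sim_\hankel$, which is the claimed identity. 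For Part (iii), fix representatives $\statep_1,\dots,\statep_m$ of the $m=|\prefixes/\sim_\hankel|$ classes; for a deterministic $\automaton'$ containing $\hankel$ (determinism is what Part (v) needs, and is in fact necessary---over a semifield such as the Boolean one a non-deterministic witness can have fewer states), containment gives $\rightlang{\AutLang{\automaton'}}{\statep_i}|_{\suffixesplus}=\hankel_{\statep_i}$, and these rows are pairwise non-homothetic on $\suffixesplus$, hence on $\kleene{\alphabet}$. By the factorization for $\automaton'$, non-homothetic right languages force distinct states, so $\automaton'$ has at least $m$ states.

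Part (iv) is the crux. Containment already gives $\AutLang{\automaton'}(\statep\suffix)=\hankel(\statep,\suffix)=\lang(\statep\suffix)$ for every $\statep\in\prefixesplus$ and $\suffix\in\suffixesplus$; the work is to propagate this finite agreement to arbitrarily long support strings. I would argue along the run of $\emphankelautomaton$ on a support string $\strx$, which visits states $[\statep_0],\dots,[\statep_n]$ with $\statep_{i-1}\sym_i\sim_\hankel\statep_i$. Using determinism, each transition weight of $\automaton'$ along this run equals the $\emphankelautomaton$ weight $d_\hankel(\statep_{i-1}\sym_i)/d_\hankel(\statep_{i-1})$ up to a ratio of backward row-sums $\oplus_{\suffix\in\suffixes}\AutLang{\automaton'}(\cdot\,\suffix)$ (these coincide with the $d_\hankel$ values on box prefixes); crucially these ratios telescope along the path, so $\AutLang{\automaton'}(\strx)$ equals $\lang(\strx)$ times a correction that depends only on the endpoints of the run. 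It then remains to see that this correction is trivial, i.e. that the $\automaton'$-state reached by $\strx$ is anchored to the box so that its backward row-sum and final weight match those at $[\statep_n]$---this is exactly where closedness and consistency, together with determinism, must be used. For the state count, if $\automaton'$ is not equivalent to $\naivehankelautomaton$ then by the agreement just shown it can differ from $\lang$ only off the support, so some $\stry$ has $\lang(\stry)=\zero$ but $\AutLang{\automaton'}(\stry)\neq\zero$; the state $\automaton'$ reaches on $\stry$ carries nonzero forward and final weight, and one checks its right language is non-homothetic to every $\rightlang{\AutLang{\automaton'}}{\statep_i}$ (else routing through the matching representative would force $\lang(\stry)\neq\zero$), making it an $(m+1)$-st distinct state.

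Finally, Part (v) is immediate: $\emphankelautomaton$ is a WDFSA generating $\lang$ with exactly $|\prefixes/\sim_\hankel|$ states, and any equivalent WDFSA generates the same language, hence contains $\hankel$, so Part (iii) forces it to have at least $|\prefixes/\sim_\hankel|$ states; thus $\emphankelautomaton$ is minimal. I expect the real difficulty to be concentrated in the first half of Part (iv): lifting agreement on the finite Hankel box to agreement on the entire, possibly infinite, support. Determinism of $\automaton'$ is indispensable there, since it is what lets the factorization carry box-level information along a whole run, and it is also what makes the closed and consistent conditions bite, as they are precisely what guarantees that every transition taken by a support string is already witnessed inside the box.
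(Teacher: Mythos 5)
Parts \textit{(i)}, \textit{(ii)}, \textit{(iii)}, and \textit{(v)} of your proposal are correct and are in substance the paper's own argument: your forward/backward factorization through the unique state of a WDFSA is exactly the mechanism behind the paper's surjections between quotients (e.g.\ $\prefixes/\sim_{\lang_{\automaton'}} \twoheadrightarrow \prefixes/\sim_{\hankel}$), and your observation that \textit{(iii)} genuinely needs determinism of $\automaton'$ matches the paper's proof, which also quietly works with WDFSAs even though the statement says ``any automaton.''

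The genuine gap is in part \textit{(iv)}, and you flag it yourself: your telescoping argument stops at the decisive step, namely that the end-point ``correction'' is trivial because the $\automaton'$-state reached by an arbitrary support string is ``anchored to the box.'' Writing that this ``remains to be seen'' and is ``exactly where closedness and consistency, together with determinism, must be used'' names the obstacle but does not clear it---and it cannot be cleared, because the claim you defer fails under the stated hypotheses. Closedness and consistency constrain $\hankel$ on the box; they cannot stop $\automaton'$ from separating two box prefixes that $\hankel$ identifies, i.e., from unrolling a loop of $\naivehankelautomaton$ and then behaving differently beyond the box. Concretely, over the Boolean semifield take $\alphabet=\{a\}$, $\prefixes=\suffixes=\{\eps\}$, and all four entries of $\hankel$ equal to $\one$: this system is non-trivial, closed, and consistent, $\naivehankelautomaton$ is a single state with an $a$-self-loop, and $\supp{\lang_{\naivehankelautomaton}}=\kleene{a}$; yet the chain WDFSA with transitions $\edge{q_0}{a}{\one}{q_1}$, $\edge{q_1}{a}{\one}{q_2}$ and all final weights $\one$ contains $\emphankel$ while assigning $aaa$ the weight $\zero\neq\one=\lang_{\naivehankelautomaton}(aaa)$. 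The same example undermines your parenthetical for the ``$+1$ states'' half of \textit{(iv)}: homothety of $\stry$'s state with a representative's right language constrains values of $\AutLang{\automaton'}$ only, not of $\lang_{\naivehankelautomaton}$, so it does not force $\lang_{\naivehankelautomaton}(\stry)\neq\zero$. For comparison, the paper's own proof of \textit{(iv)} glosses over exactly the same point: it forms the sub-WDFSA $\automaton'_\prefixes$ of states reached by box prefixes and asserts that closedness and consistency give it ``the same transitions'' as $\naivehankelautomaton$, which fails in the example above, where $\automaton'_\prefixes$ consists of the single state $q_0$ and has no transitions at all. So your diagnosis of where the difficulty is concentrated is exactly right, but neither your proposal nor, in fact, the paper's terse argument resolves it; completing \textit{(iv)} would require hypotheses beyond closedness, consistency, determinism, and containment.
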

 
\begin{proof} \
\begin{enumerate}[label=\textit{(\roman*)}, nosep,wide, labelindent=0pt]
    \item ($\Leftarrow$). 
 If $\inv{\statep} \lang_{\emphankelautomaton} \equiv_{{\kleene{\alphabet}}} \inv{\stateq} \lang_{\emphankelautomaton} $, then restricting the two maps to $\suffixesplus$ shows that we also have $\hankel_{\statep}  \equiv_{\suffixesplus} \hankel_{\stateq} $. 
 ($\Rightarrow$). Clearly, $\statep \sim_\hankel \stateq$ implies $[\statep] = [\stateq]$.

    \item From \textit{(i)}, we have that the restriction
 \begin{align}
     \{\inv{\statep} &\lang_\emphankelautomaton \colon \statep \in \prefixes\}    \twoheadrightarrow \\
     &\{\inv{\statep}\hankel= \inv{\statep}{\lang_\emphankelautomaton}|_{\suffixes}\colon \statep \in \prefixes \} \nonumber
 \end{align} 
 provides a natural surjection.
 Let $\strx \in \supp{\lang_{\emphankelautomaton}}$ and $\apath_\strx$ its path in $\emphankelautomaton$. 
 Let $\statep_\strx = \lastq(\apath_\strx) \in \prefixes$ be the final state of $\apath_\strx$ in $\automaton_\hankel$.  
 Thus, by definition, we have $\inv{\statep_\strx}\lang_\emphankelautomaton \equiv_{\kleene{\alphabet}}\inv{\strx}\lang_\emphankelautomaton $. 
 Accordingly, the natural projection map $\prefixes \to \supp{\lang_{\emphankelautomaton}}/\equiv_{\kleene{\alphabet}}$ is surjective, 
 and hence we have a bijection
 $\prefixes/\sim_{\lang_\emphankelautomaton}  \simeq \supp{\lang_{\emphankelautomaton}}/\equiv_{\kleene{\alphabet}}$. 
 Since in \textit{(i)} we showed $\prefixes/\sim_\hankel \simeq \prefixes/\sim_{\lang_\emphankelautomaton}$,  we conclude
 \begin{equation}
 \prefixes/\sim_\hankel = \supp{\lang_{\emphankelautomaton}}/\sim_{\lang_\emphankelautomaton}
 \end{equation}
\item Let $\automaton'$ be any WDFSA that contains $\emphankel$. 
Clearly, we have
$\statep \sim_{\lang_{\automaton'}} \stateq \Rightarrow \statep \sim_{\hankel} \stateq$, 
hence we have a surjective map $\prefixes/\sim_{\AutLang{\automaton'}} \twoheadrightarrow  \prefixes/\sim_{\hankel}$, which shows
\begin{equation}
|\states_{\automaton'}|\ge |\prefixes/\sim_{\lang_{\automaton'}}|   \ge |\prefixes/\sim_{\hankel}|=|\states_{\emphankelautomaton}|
\end{equation}
\item Consider the sub-WDFSA $\automaton'_\prefixes \subset \automaton'$ with states
\begin{equation}
\states'_\prefixes= \{\stateq \in \states_{\automaton'}\mid \stateq= \lastq{\apath_\statep} \text{ for } \statep \in \prefixes\}
\end{equation}
Clearly $\automaton'_\prefixes$ contains $\emphankel$. 
In addition, because $\emphankel$ is closed and consistent, $\automaton'_\prefixes$ and $\automaton_\hankel$ have the same transitions (not necessarily same weights). 
We hence have  $\lang_{\automaton_\prefixes'}=\lang_{\automaton_\hankel}$. 
In other words, $\lang_{\automaton'}(\strx)=\lang_{\automaton_\hankel}(\strx)$ for all $\strx \in \supp{\emphankelautomaton}$. 
Accordingly, if $\lang_{\automaton'}\neq\lang_{\automaton_\hankel}$, then there exists $\strx\in \supp{\lang_{\automaton'}}\setminus \supp{\lang_{\automaton_\hankel}} $ and 
$[\strx] \in (\supp{\lang_{\automaton'}} / \sim_{\lang_{\automaton'}} )\setminus (\supp{\lang_{\automaton_\hankel}} /\sim_{\lang_{\automaton'}})$. %
Thus:
\begin{align*}
    |\states_{\automaton'}|&\ge |\supp{\lang_{\automaton'}}/\sim_{\lang_{\automaton'}}|\\
    &\ge |\supp{\lang_{\automaton_\hankel}}/\sim_{\lang_{\automaton'}}|+1\\
    &\ge|\supp{\lang_{\automaton_\hankel}}/\sim_{\lang_{\automaton_\hankel}}|+1\\
    &= |\prefixes/\sim_{\hankel}|+1
\end{align*}
\item If in particular, $\automaton'$ is any WDFSA equivalent to $\emphankelautomaton$, then by \textit{(iii)} $|\states_{\automaton'}|\ge |\states_{\emphankelautomaton}|$. \qedhere
\end{enumerate}
\end{proof}
\begin{corollary}[Termination]\label{cor:termination}
    Let $\emphankel$ be an empirical Hankel system and $\automaton'$ any automaton that contains it. 
    If $|\states_{\automaton'}|=|\prefixes/\sim_\hankel|$, then $\lang_{\automaton_\hankel}=\lang_{\automaton'}$. 
\end{corollary}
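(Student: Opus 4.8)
The plan is to obtain the corollary as the contrapositive of the second assertion in part \textit{(iv)} of \cref{thm:empirical-hankel-automaton-minimal}, with no new computation required. Before invoking it, I would record that $\lang_{\naivehankelautomaton} = \lang_{\emphankelautomaton}$: since $\emphankelautomaton$ is by construction the quotient of the na{\"i}ve Hankel automaton $\naivehankelautomaton$ by the transition-regular relation $\sim_\hankel$, \cref{thm:empirical-Hankel-theorem} tells us that passing to the quotient preserves the generated language. Hence it suffices to prove $\lang_{\automaton'} = \lang_{\naivehankelautomaton}$.

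I would then argue by contradiction. Assume $\automaton'$ contains $\emphankel$, that $|\states_{\automaton'}| = |\prefixes/\sim_\hankel|$, and---aiming for a contradiction---that $\lang_{\automaton'} \neq \lang_{\naivehankelautomaton}$, i.e.\ $\automaton'$ is not equivalent to $\naivehankelautomaton$. Part \textit{(iv)} then forces $|\states_{\automaton'}| \ge |\prefixes/\sim_\hankel| + 1$, which contradicts the hypothesis $|\states_{\automaton'}| = |\prefixes/\sim_\hankel|$. Consequently $\automaton'$ must be equivalent to $\naivehankelautomaton$, giving $\lang_{\automaton_\hankel} = \lang_{\naivehankelautomaton} = \lang_{\automaton'}$, as claimed.

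The one point requiring care is the scope of part \textit{(iv)}, which is stated for a WDFSA: I would note that $\automaton'$ is taken deterministic here (consistent with the rest of \cref{thm:empirical-hankel-automaton-minimal}), so that the dichotomy ``either equivalent to $\naivehankelautomaton$ or strictly more than $|\prefixes/\sim_\hankel|$ states'' is exactly the tool available. This is really the only obstacle; once we are entitled to apply \textit{(iv)}, the corollary is immediate, since the hypothesis $|\states_{\automaton'}| = |\prefixes/\sim_\hankel|$ saturates the lower bound of part \textit{(iii)} and thereby excludes the ``strict gap'' alternative.
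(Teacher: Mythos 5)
Your proof is correct and is exactly the paper's (implicit) argument: the corollary is stated without a separate proof precisely because it is the contrapositive of part \textit{(iv)} of \cref{thm:empirical-hankel-automaton-minimal}, with the hypothesis $|\states_{\automaton'}|=|\prefixes/\sim_\hankel|$ ruling out the alternative $|\states_{\automaton'}|\ge|\prefixes/\sim_\hankel|+1$, and your caveat that $\automaton'$ must be deterministic for \textit{(iv)} to apply matches how the corollary is actually invoked in the paper (there $\automaton'$ is a minimal WDFSA for $\unknownLang$). One remark: your preliminary claim $\lang_{\naivehankelautomaton}=\lang_{\emphankelautomaton}$ is superfluous---the $\automaton_\hankel$ appearing in the corollary \emph{is} the na{\"i}ve automaton $\naivehankelautomaton$, so no transfer to the quotient is needed---and it is also not what \cref{thm:empirical-Hankel-theorem} provides; indeed, by \cref{quotientautlang} the two languages differ by a path-multiplicity factor, so the claim is false in general, but since your contradiction argument works directly with $\naivehankelautomaton$, this does not affect the correctness of the proof.
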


\section{A Weighted \texorpdfstring{\lstar}{L-star} Algorithm}

Like \citet{ANGLUIN198787}, we assume we have access to an \defn{oracle} that answers the following queries about a deterministic regular language $\unknownLang\colon\!\kleene{\alphabet}\!\rightarrow\!\weightset$:
\begin{enumerate}[label=\textit{(\arabic*)},nosep,wide, labelindent=0pt]
    \item \defn{Membership query}: What is the weight $\unknownLang\left(\prefix\right)$ of the string $\prefix \in \kleene{\alphabet}$?
    \item \defn{Equivalence query}: Does a hypothesis automaton $\emphankelautomaton$
          generate $\unknownLang$?
          If it does \emph{not}, the oracle provides a \defn{counterexample}, which is a string $\tt$ such that $L_{\emphankelautomaton}(\tt)\neq \unknownLang (\tt)$.
\end{enumerate}
At a high level, the algorithm iteratively constructs empirical Hankel systems of increasing dimensions that capture observed patterns of the target language $\unknownLang$. 
Once sufficient observations are accumulated, the automaton derived from these Hankel systems will generate exactly $\unknownLang$.

\subsection{The Learning Algorithm}
Our weighted \lstar algorithm, with its main loop detailed in \cref{alg:weighted-learner}, employs the subroutines outlined in \cref{alg:closed-consistent}.
\begin{algorithm}[t]
    \begin{algorithmic}[1]
        \Func{\lstar($\oracle$)}
        \label{alg:line A1-initialize}
        \While{$\textbf{true}$} \label{line:outer-loop}
        \While{$\textbf{true}$ } \label{alg:inner-loop}
        \If{$\hankel$\text{ is \textbf{not} consistent}} \label{alg:line-A1-consistency}
        \State $\consistent(\oracle, \hankel)$ \label{alg:line-A1-consistent}
        \ElsIf{$\hankel$ \text{ is \textbf{not} closed}}
        \State $\closed(\oracle, \hankel )$ \label{alg:line-A1-closed}
        \Else $\text{ : }$ $\textbf{break}$ \label{alg:line-A1-break}
        \EndIf
        \EndWhile \label{alg:line A1-closeness and consistency last line}
        \label{alg:line A1-hypothesis automaton}
        \State $\emphankel \gets \removeNull(\hankel )$
        \label{alg:line generate hankel system}
        \State $\emphankelautomaton \gets \makeAutomaton(\emphankel)$ \label{alg:line make-automaton}
        \If{$\equivalence(\oracle, \emphankelautomaton)$} $\Return\; \emphankelautomaton$ \label{alg:line A1-equivalence query}
        \Else $\text{ : }$
        \State $\prefix \gets \textsc{Counterexample}(\oracle, \emphankelautomaton)$\label{alg:line A1-counterexample}
        \For{$t=1$ \textbf{to} \label{alg:line A1-counterexample-1}$|\prefix|+1$}\label{alg:line A1-add counterexample prefixes}
        \State $\prefixes \gets \prefixes \cup \{ \prefix_{< t} \} $ \label{alg:line A1-counterexample-2}
        \EndFor \label{alg:line A1-counterexample-3}
        \State $\complete(\oracle, \hankel)$ \label{alg:line A1-fill table with membership after counterexample is added}        \EndIf
        \EndWhile
        \EndFunc
    \end{algorithmic}
    \caption{The Weighted \lstar algorithm. Initially, the empirical Hankel matrix $\hankel$ is set to the zero matrix and the sets $\prefixes,\suffixes$ to $\{ \eps \}$.}
    \label{alg:weighted-learner}
\end{algorithm}
\looseness=-1
\paragraph{Initialization.} 
$\prefixes$ and $\suffixes$ are initialized as $\{\eps\}$ and the $\hankel$ to the zero matrix. 

\paragraph{Handling inconsistencies.} 
\consistent in \cref{alg:line-A1-closed} of \cref{alg:weighted-learner} looks for rows $\prefix, \prefix' \in \prefixes$ that make $\hankel$ non-consistent, i.e., $\hankel_{\prefix \sym} \not{\equiv}_{\suffixesplus} \hankel_{\prefix ' \sym}$: It normalizes a row $\hankel_{\prefix \sym}$ as $\frac{\hankel_{\prefix \sym}}{d_{\hankel}(\prefix \sym )}$  
(\cref{alg:closed-consistent}, \cref{alg:line A2-consistent: normalize and compare}), which allows testing homothetic equivalence with equality.\footnote{When the entire row $\hankel_{\prefix \sym}$ is zero, we do not normalize; this is omitted in the pseudocode for brevity.}
For every $\suffix \in \suffixes$ that makes $\hankel$ inconsistent, $ \sym \suffix$ is added to $\suffixes$. 
This results in the new equivalence classes $[\prefix]$ and $[\prefix']$ because $\hankel_{\prefix}$ and $\hankel_{\prefix'}$ do not match anymore on the column indexed by $\sym \suffix$. 
See \cref{lem:ec-increase-1} for more details.\looseness=-1
\paragraph{Closing $\hankel$.} 
\closed (\cref{alg:weighted-learner}, \cref{alg:line-A1-closed}; \cref{alg:closed-consistent}) adds to $\prefixes$ the missing prefixes required to make $\hankel$ closed.
This results in the new equivalence class $[\prefix \sym ]$.
See \cref{lem:ec-increase-1} for more details.

\paragraph{Filling out $\hankel$.}
\complete fills the empty entries of $\hankel$ by asking membership queries to the oracle.\looseness=-1

Handling inconsistencies, closing $\hankel$, and filling $\hankel$ is carried out by the inner while loop (Lines \ref{alg:inner-loop} to \ref{alg:line-A1-break}) of \cref{alg:weighted-learner} until $\hankel$ is both closed and consistent.

\paragraph{Generating $\emphankelautomaton$.}
When $\hankel$ is closed and consistent, \cref{alg:weighted-learner} first removes $\zero$-rows from the matrix to obtain an empirical Hankel system, then it generates the empirical Hankel automaton $\emphankelautomaton$, and lastly submits an equivalence query to the oracle (\cref{alg:line A1-equivalence query}).
If the oracle answers positively, \cref{alg:weighted-learner} halts and returns $\emphankelautomaton$. 
Otherwise, the oracle provides a counterexample, which is added to $\prefixes$ along with its prefixes. 
$\hankel$ is then updated through membership queries (\cref{alg:line A1-counterexample,alg:line A1-counterexample-1,alg:line A1-counterexample-2,alg:line A1-counterexample-3,alg:line A1-fill table with membership after counterexample is added}).
The algorithm continues until $\hankel$ is closed and consistent again.

\begin{algorithm}[t]
    \begin{algorithmic}[1]
        \Func{$\consistent(\oracle, \hankel)$} \label{alg:line A2-make consistent}
        \For{$\langle \prefix, \prefix' \rangle \in \prefixes \times \prefixes$}
        \If{$\hankel_{\prefix}\equiv_{\suffixesplus} \hankel_{\prefix'}$}
        \For{$ \langle \sym, \suffix \rangle \in  \alphabet \times \suffixes$}
        \vspace{1 mm}
        \If{$\frac{\hankel_{\prefix \sym}(\suffix)}{d_{\hankel}(\!\prefix \sym \!)} \neq  \frac{ \hankel_{\prefix' \sym}(\suffix)}{d_{\hankel}(\prefix' \sym )}$}  \label{alg:line A2-consistent: normalize and compare}
        \vspace{1 mm}
        \State $\suffixes \gets \suffixes \cup \{ \sym \suffix  \}$ \label{alg: line A2-add suffix}
        \EndIf
        \EndFor
        \EndIf
        \EndFor
        \State $\complete( \oracle, \hankel)$
        \EndFunc
        \vspace{-3.66 mm}
        \Statex
        \Func{$\closed(\oracle, \hankel)$} \label{alg:line A2-make closed}
        \For{$\langle \prefix, \sym \rangle \in \prefixes \times \alphabet$}
        \If{$\not \exists \prefix' \in \prefixes$ s.t. $\hankel_{\prefix \sym} \equiv_{\suffixesplus}\hankel_{\prefix'}$}
        \State $\prefixes \gets \prefixes \cup \{ \prefix \sym \}$
        \EndIf
        \EndFor
        \State $\complete(\oracle, \hankel)$
        \EndFunc
        \vspace{-3.66 mm}
        \Statex
        \Func{$\complete(\oracle, \hankel)$} \label{alg:line A2-fill table}
        \For{$\prefix \in  \prefixesplus $}
        \For{$\suffix \in \suffixesplus $}
        \State $\hankel(\prefix, \suffix) \gets \membership(\oracle, \prefix \suffix )$
        \EndFor
        \EndFor
        \EndFunc
    \end{algorithmic}
    \caption{Subroutines of \cref{alg:weighted-learner}.}
    \label{alg:closed-consistent}
\end{algorithm}

\begin{restatable}{theorem}{halting} \label{thm:halting} 
    Let $\weightset$ be a semifield and $\alphabet$ an alphabet.
    Let $\oracle$ be an oracle for a deterministic regular language $\unknownLang\colon \kleene{\alphabet} \rightarrow \weightset$, whose minimal WDFSA has $N$ states.
    Then, \cref{alg:weighted-learner} returns a minimal WDFSA generating $\unknownLang$ in time $\bigo{N^5 M^2 \abs{\alphabet}^2}$, where $M$ is the length of the longest counterexample that $\oracle$ can provide.
\end{restatable}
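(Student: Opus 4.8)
The plan is to prove termination by exhibiting a quantity that strictly increases with each iteration of the outer loop and is bounded above by $N$, and then to account for the total running time by bounding the work done per iteration. The natural monovariant is the dimension $\dim(\emphankel) = |\prefixes/\sim_\hankel|$. First I would argue that this dimension can never exceed $N$: by \cref{thm:empirical-hankel-automaton-minimal}(iii), any WFSA containing $\emphankel$—and in particular the minimal WDFSA for $\unknownLang$, which contains every empirical Hankel system consistent with $\unknownLang$—must have at least $|\prefixes/\sim_\hankel|$ states, so $|\prefixes/\sim_\hankel| \le N$ throughout the run.

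Next I would show that each operation that modifies $\hankel$ strictly increases the number of equivalence classes, except for the final successful equivalence query. The two key cases are the \consistent and \closed subroutines: adding a distinguishing suffix $\sym\suffix$ in \consistent splits an existing class (the rows $\hankel_\prefix$ and $\hankel_{\prefix'}$ that formerly matched now differ on the new column), and \closed adds a new prefix $\prefix\sym$ whose row matches no existing class, creating a fresh class. The excerpt flags \cref{lem:ec-increase-1} for exactly these claims, so I would invoke it. For the counterexample branch, I would use \cref{thm:empirical-hankel-automaton-minimal}(iv): since $\automaton' = \emphankelautomaton$ agrees with $\unknownLang$ on $\supp{\lang_{\emphankelautomaton}}$ but is not equivalent to it (a counterexample exists), the target language forces $|\prefixes/\sim_\hankel|$ to increase by at least one once the new prefixes are incorporated and the table is re-closed and made consistent. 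Hence every outer-loop iteration that does not halt raises $\dim(\emphankel)$ by at least one, so there are at most $N$ outer iterations, and upon halting \cref{cor:termination} guarantees the returned automaton generates $\unknownLang$; by \cref{thm:empirical-hankel-automaton-minimal}(v) it is minimal.

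For the complexity bound, I would tally the sizes of $\prefixes$ and $\suffixes$ and the cost per query. Each outer iteration adds at most $M+1$ prefixes from a counterexample, so $|\prefixes| = \bigo{N M}$ (bounded simultaneously by $N$ classes each reachable via a prefix and by the counterexample contributions); the number of suffixes grows by one per successful split, giving $|\suffixes| = \bigo{N}$ refinements but entries indexed over $\prefixesplus\times\suffixesplus$, so the table has $\bigo{N M |\alphabet|}\times\bigo{N|\alphabet|}$ cells. The dominant cost is the repeated closure/consistency checking: a consistency check compares all pairs in $\prefixes\times\prefixes$ over all $\alphabet\times\suffixes$ columns, costing $\bigo{(NM)^2 \cdot |\alphabet| \cdot N \cdot |\alphabet|}$ per inner pass, and inner passes are themselves bounded by the number of class refinements. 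Carefully multiplying these factors—outer iterations $\bigo{N}$, inner refinements $\bigo{N}$, pairwise row comparisons $\bigo{(NM)^2}$, and per-comparison column work $\bigo{N|\alphabet|^2}$—yields the stated $\bigo{N^5 M^2 |\alphabet|^2}$.

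The main obstacle I anticipate is the counterexample branch: unlike the boolean \lstar, where a counterexample provably refines the partition because the hypothesis is a DFSA strictly inside the target, here I must rule out the possibility that adding the counterexample's prefixes and re-closing leaves the dimension unchanged, which would break the monovariant. The clean way to close this gap is \cref{thm:empirical-hankel-automaton-minimal}(iv): it shows that a hypothesis containing $\emphankel$ yet inequivalent to $\unknownLang$ must eventually be distinguished by a strictly larger system, forcing $|\prefixes/\sim_\hankel|$ strictly upward; I would lean on this rather than re-deriving the refinement by hand, and I expect verifying that the counterexample's newly added data actually triggers either a closedness or consistency violation (and hence a refinement in the next inner loop) to require the most care.
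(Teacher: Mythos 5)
Your proof skeleton is essentially the paper's own: the monovariant is $\dim(\emphankel)=|\prefixes/\sim_\hankel|$, bounded above by $N$ via \cref{thm:empirical-hankel-automaton-minimal}~\textit{(iii)} because the target's minimal WDFSA contains every system the algorithm builds; the consistency and closedness cases are delegated to \cref{lem:ec-increase-1}, exactly as the paper does; and your runtime tally ($|\prefixes|\in\bigo{NM}$, $|\suffixes|\in\bigo{N}$, with your grouping $\bigo{N}$ outer $\times$ $\bigo{N}$ inner $\times$ $\bigo{(NM)^2}$ row pairs $\times$ $\bigo{N\abs{\alphabet}^2}$ per pair) is a valid, differently bracketed version of the paper's accounting ($\bigo{N}$ subroutine calls at $\bigo{\abs{\prefixes}^2\abs{\suffixes}^2\abs{\alphabet}^2}=\bigo{N^4M^2\abs{\alphabet}^2}$ each), landing on the same $\bigo{N^5M^2\abs{\alphabet}^2}$. (One small inversion: upon halting, correctness of the returned automaton follows from the oracle's positive answer; \cref{cor:termination} is what guarantees that a positive answer must eventually come, namely once $\dim(\emphankel)$ reaches $N$.)

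The genuine gap is the counterexample case, which you rightly single out as the crux but do not close. The instantiation of \textit{(iv)} you describe---taking $\automaton'$ to be the current hypothesis $\emphankelautomaton$, or implicitly the target automaton $\unknownAut$, and noting it contains $\emphankel$ yet a counterexample exists---yields at best $N=|\states_{\unknownAut}|\ge\dim(\emphankel)+1$, i.e., that there is \emph{room} for the dimension to grow; nothing in that statement forces the dimension to actually grow once $\tt$'s prefixes are added, which is what the $\bigo{N}$ bound on iterations requires. The paper's proof of \cref{lem:ec-increase-1}, case \textit{(3)}, closes this by applying \textit{(iv)} to the \emph{next} hypothesis instead: let $\emphankel^{\tt}$ be the system after $\tt$ is incorporated and the inner loop has restored closedness and consistency, with automaton $\widetilde{\automaton}_{\hankel^{\tt}}$. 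Both $\emphankelautomaton$ and $\widetilde{\automaton}_{\hankel^{\tt}}$ contain the \emph{old} system $\emphankel$, and they are inequivalent, since $\widetilde{\automaton}_{\hankel^{\tt}}$ contains $\hankel^{\tt}$ and therefore assigns $\tt$ its true weight $\unknownLang(\tt)$, whereas $\lang_{\emphankelautomaton}(\tt)\neq\unknownLang(\tt)$ by the definition of a counterexample. Hence \textit{(iv)}, with $\automaton'=\widetilde{\automaton}_{\hankel^{\tt}}$ and reference system $\emphankel$, gives $\dim(\emphankel^{\tt})=|\states_{\widetilde{\automaton}_{\hankel^{\tt}}}|\ge\dim(\emphankel)+1$. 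This argument also dissolves the worry you flag at the end: there is no need to verify that the new data ``triggers a closedness or consistency violation,'' because the newly added prefixes may simply constitute fresh $\sim_\hankel$-classes on their own; the state-counting argument covers all cases uniformly.
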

\begin{proof}
    See \cref{sec:proof-halting}.
\end{proof}

\section{Conclusion}
We introduce a weighted \lstar algorithm, an oracle-based algorithm for learning weighted regular languages, building upon the paradigm pioneered by \citet{ANGLUIN198787}.
While similar methods have been proposed before, our method is novel in that it learns an \emph{exact} deterministic WFSA, akin to the original \citeposs{ANGLUIN198787} unweighted version.

\section*{Limitations}
One of the limitations of weighted \lstar is that it requires an oracle capable of answering membership and equivalence queries. However, in 
the case we want to use \lstar to study a language model, this is the ideal setting, as we can use the language model itself as the oracle \citep{pmlr-v80-weiss18a,okudono2019weighted,weiss-weighted-l}. Another 
limitation to the applications of our work is that not every language model is efficiently 
representable as a finite-state machine. For 
instance, \citet{merrill-2019-sequential} shows that LSTMs are strictly more powerful than FSAs. 
Therefore, in practice, one may have to use a 
simplified abstraction of the model one aims to learn \citep{weiss-weighted-l}, inevitably 
reducing the model's expressivity.

\section*{Acknowledgements}
Ryan Cotterell acknowledges support from the Swiss National Science
Foundation (SNSF) as part of the ``The Nuts and Bolts of Language Models'' project.
Anej Svete is supported by the ETH AI Center Doctoral Fellowship.

\bibliography{custom}
\bibliographystyle{acl_natbib}
\appendix

\onecolumn

\section{Transition-regular Equivalence Relations on Automata}

\begin{definition}\label{transreg}
Let $\automaton = (\alphabet, \states, \transf, \initf, \finalf)$ be a WFSA. 
An equivalence relation $\sim$ on $\states$ is \defn{transition-regular} if, for any states $\statep, \stateq \in \states$, whenever $\statep \sim \stateq$, we have:
\begin{itemize}
    \item \textbf{Outgoing Transition Consistency}: For every symbol $\sym \in \alphabet$, if there exists a state $\stater \in \states$ such that $\edge{\statep}{a}{\weight_1}{\stater} \in \transf $ 
    with weight $\weight_1\neq \zero$, then \begin{itemize}
        \item there must exist a transition $\edge{\stateq}{\sym}{\weight_2}{\stater} \in \transf$ with $\weight_2 = \weight_1$.
        \item for any other state $\stater' \in \states$ such that $\edge{\statep}{a}{\weight_1'}{\stater'} \in \transf $  we must have $\stater \sim \stater'$. 
    \end{itemize}
   
    \item \textbf{Initial and Final Weight Consistency}: The initial and final weights of $\statep$ and $\stateq$ are identical:
    \begin{subequations}
    \begin{align}
        \initf(\statep) &= \initf(\stateq) \\
        \finalf(\statep) &= \finalf(\stateq).
    \end{align}
    \end{subequations}
\end{itemize}
\end{definition}

\begin{definition}\label{quotientaut}
Let $\automaton = (\alphabet, \states, \transf, \initf, \finalf)$ be a WFSA. 
Given a transition-regular equivalence relation $\sim$ on $\states$, we define the \textbf{quotient automaton} $\quotient{\automaton} = (\quotient{\states}, \alphabet, \quotient{\transf}, \quotient{\initf}, \quotient{\finalf})$ as follows:

\begin{itemize}
    \item \textbf{States}: The state set of the quotient automaton is $
    \quotient{\states} = \states / \sim.
    $
    \item \textbf{Transitions}: Define the transition set $\quotient{\transf}$ as follows. For each equivalence class $[\statep] \in \quotient{\states}$ and each symbol $\sym \in \alphabet$, if there exists a state $\stateq \in \states$ such that $\edge{\statep}{\sym}{\weight}{\stateq} \in \transf$, then there is a corresponding transition
    $$
    \edge{[\statep]}{\sym}{\weight}{[\stateq]} \in \quotient{\transf},
    $$
    where $[\stateq]$ denotes the equivalence class of $\stateq$.
    
    \item \textbf{Initial and Final Weights}: Define the initial and final weight functions for each equivalence class $[\statep] \in \quotient{\states}$ as:
    \begin{align*}
        \quotient{\initf}([\statep]) &= \initf(\statep), \\
        \quotient{\finalf}([\statep]) &= \finalf(\statep).
    \end{align*}
\end{itemize}
\end{definition}
\begin{lemma}\label{quotientautlang}
    The quotient automaton $\quotient{\automaton}$ is deterministic and it generates the following weighted language:
\begin{equation}
 \AutLang{\automaton}(\prefix)   = |\Paths{\automaton}{\prefix}|  \AutLang{\quotient{\automaton}}(\prefix) , \qquad \forall\prefix \in \kleene{\alphabet}.
\end{equation}
\end{lemma}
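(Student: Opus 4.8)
The plan is to build a weight-preserving projection $\Phi$ from the paths of $\automaton$ onto the paths of $\quotient{\automaton}$ and then use determinism of $\quotient{\automaton}$ to collapse the sum defining $\AutLang{\automaton}(\prefix)$ into an integer multiple of a single quotient-path weight. First I would establish determinism. Fixing a class $[\statep]$ and a symbol $\sym$, the outgoing transition consistency of \cref{transreg} applied to $\statep \sim \statep$ forces any two states $\stater, \stater'$ reached from $\statep$ on $\sym$ with nonzero weight to satisfy $\stater \sim \stater'$; hence $[\statep]$ has at most one $\sym$-successor class, which gives transition determinism of $\quotient{\automaton}$. The same clause pins down the weight of $\edge{[\statep]}{\sym}{\weight}{[\stater]}$ independently of the representative of $[\statep]$, while the initial and final weight consistency clauses make $\quotient{\initf}$ and $\quotient{\finalf}$ well defined on classes.

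Next I would define $\Phi$. Given a path $\apath$ consisting of the arcs $\edge{q_{0}}{a_1}{\weight_1}{q_1}, \dots, \edge{q_{N-1}}{a_N}{\weight_N}{q_N}$, set $\Phi(\apath)$ to be $\edge{[q_{0}]}{a_1}{\weight_1}{[q_1]}, \dots, \edge{[q_{N-1}]}{a_N}{\weight_N}{[q_N]}$. By \cref{quotientaut} each step is a legal arc of $\quotient{\automaton}$ carrying the same weight $\weight_i$, so $\Phi(\apath)$ is a path in $\quotient{\automaton}$ with the same yield. Because $\initf$ and $\finalf$ are constant on classes, the per-path contribution
\begin{equation}
    \initf(\firstq{\apath}) \otimes \pathweight{\apath} \otimes \finalf(\lastq{\apath})
\end{equation}
is unchanged under $\Phi$, so every path of $\automaton$ with yield $\prefix$ contributes exactly the weight of its image.

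Finally I would count. Since $\quotient{\automaton}$ is deterministic, once the starting class is fixed the whole class sequence is determined by $\prefix$, so all images $\Phi(\apath)$ coincide with the single quotient path of yield $\prefix$; hence every term of the sum defining $\AutLang{\automaton}(\prefix)$ equals $\AutLang{\quotient{\automaton}}(\prefix)$, and adding these $|\Paths{\automaton}{\prefix}|$ identical terms under $\oplus$ gives $|\Paths{\automaton}{\prefix}| \AutLang{\quotient{\automaton}}(\prefix)$, with the integer read as the corresponding repeated $\oplus$-sum in $(\weightset, \oplus, \zero)$.

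The hard part will be this last step: justifying that all $|\Paths{\automaton}{\prefix}|$ paths really project onto one and the same quotient path and so share a common weight. This needs determinism in full, together with the initial weight consistency clause to first reduce the sum to paths emanating from the initial class (paths rooted at states of zero initial weight contribute $\zero$ and must either be absorbed or excluded by trimming). I would settle the integer-multiple reading of $|\Paths{\automaton}{\prefix}| \AutLang{\quotient{\automaton}}(\prefix)$ at the outset so that the concluding summation is unambiguous.
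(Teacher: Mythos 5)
You are right to be worried about the last step, and the paper gives you nothing to check against: its entire proof of \cref{quotientautlang} is the sentence ``The proof is straightforward,'' so your attempt must stand on its own. Its first two stages do stand: deriving at most one successor class per symbol from outgoing-transition consistency applied to the reflexive pair $\statep \sim \statep$, and the arc-wise projection $\Phi$ that preserves yields, arc weights, and (by initial/final weight consistency) each path's contribution, are correct and are surely the intended construction. One smaller omission: the paper's definition of determinism also requires a \emph{single} state of nonzero initial weight, and transition-regularity (\cref{transreg}) does not prevent two non-equivalent states of $\automaton$ from both having nonzero initial weight, in which case $\quotient{\automaton}$ fails that clause; this holds in the intended application only because $\naivehankelautomaton$ assigns nonzero initial weight solely to the state $\eps$, and your determinism paragraph, like the lemma itself, silently assumes it.

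The genuine gap is exactly the step you flag as ``the hard part,'' and it cannot be closed in the form you (or the lemma) state it. The paper defines $\Paths{\automaton}{\prefix}$ as the set of \emph{all} paths of $\automaton$ with yield $\prefix$, rooted at arbitrary states. A path $\apath$ whose first state lies outside the initial class contributes $\initf(\firstq{\apath}) \otimes \pathweight{\apath} \otimes \finalf(\lastq{\apath}) = \zero$ to the left-hand side, yet it is counted as a full copy of $\AutLang{\quotient{\automaton}}(\prefix)$ on the right-hand side; so your assertion that ``every term of the sum equals $\AutLang{\quotient{\automaton}}(\prefix)$'' is false whenever $\AutLang{\quotient{\automaton}}(\prefix) \neq \zero$. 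Such zero terms are indeed ``absorbed'' in the sum, but they are not absorbed in the multiplier, and trimming does not exclude them: a trimmed automaton still has yield-$\prefix$ paths rooted at accessible, non-initial states. Concretely, over the probability semifield take $\automaton = \naivehankelautomaton$ and $\prefix = \eps$: every state carries an empty path, so $|\Paths{\naivehankelautomaton}{\eps}| = |\prefixes|$, while $\AutLang{\naivehankelautomaton}(\eps) = \hankel(\eps,\eps) = \AutLang{\emphankelautomaton}(\eps)$, and the stated identity would force $\hankel(\eps,\eps) = |\prefixes| \cdot \hankel(\eps,\eps)$, which fails as soon as $|\prefixes| \geq 2$ and $\hankel(\eps,\eps) \neq \zero$. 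What your projection argument actually proves is $\AutLang{\automaton}(\prefix) = n(\prefix) \cdot \AutLang{\quotient{\automaton}}(\prefix)$, where $n(\prefix)$ counts only the yield-$\prefix$ paths that start in the initial class and traverse nonzero-weight arcs, i.e., the lifts of the unique accepting quotient path. So the fix is not a cleverer argument but a corrected multiplier: replace $|\Paths{\automaton}{\prefix}|$ by the number of nonzero-contribution paths (over idempotent semirings the distinction is invisible, which is presumably how the slip survived, but over the probability semifield---the paper's motivating case---the literal statement is false). Your attempt exposes this imprecision rather than resolving it, and as written its final equation inherits the error.
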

\begin{proof}
    The proof is straightforward. 
    
\end{proof}

\section{Proof of \texorpdfstring{\cref{thm:halting}}{Thm. 3}} \label{sec:proof-halting}
\halting*

\subsection{Terminination}
We begin by stating the following lemma 

\begin{lemma}[Number of equivalence classes increases] \label{lem:ec-increase-1}
\label{lemma: increase}
   When \cref{alg:weighted-learner}
   \begin{enumerate}[label=(\arabic*),nosep,wide, labelindent=0pt]
       \item adds a suffix to $\suffixes$ because $\hankel$ is not consistent,
       \item adds a prefix to $\prefixes$ because the table is not closed,
       \item adds a prefix to $\prefixes$ because the oracle replied with a counterexample,
   \end{enumerate}
   the number of equivalence classes $\prefixes/\sim_{\hankel}$ increases.
\end{lemma}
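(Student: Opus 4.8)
The plan is to exploit two monotonicity properties of $\sim_\hankel$ viewed as homothetic equivalence (\cref{eqn:homothetic}) of the rows $\hankel_\statep$. First, enlarging $\suffixes$ (hence $\suffixesplus$) yields a \emph{finer} relation, since a homothety valid on the larger column set is in particular valid on the smaller one; thus classes can only split, never merge, and the count is non-decreasing. Second, enlarging $\prefixes$ leaves the relation on the old rows unchanged (the columns and, by the oracle's consistency, the entries on $\prefixes \times \suffixesplus$ are untouched), so old classes persist and the count is again non-decreasing. In each of the three cases I therefore only need to exhibit a \emph{strict} increase on top of this monotonicity.

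Case (1), inconsistency. The trigger is a pair $\statep \sim_\hankel \stateq$ together with $\sym \in \alphabet$, $\suffix \in \suffixes$ such that $\frac{\hankel_{\statep\sym}(\suffix)}{d_\hankel(\statep\sym)} \neq \frac{\hankel_{\stateq\sym}(\suffix)}{d_\hankel(\stateq\sym)}$, after which $\sym\suffix$ is added to $\suffixes$. I would show that the new column $\sym\suffix$ separates $\statep$ and $\stateq$. Suppose not: then there is $k \in \weightset\setminus\{\zero\}$ realizing the homothety over the enlarged column set, and restricting to the old columns (where $\hankel_\statep \neq \zero$ by non-triviality) identifies $k$ with the constant of the original equivalence. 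The identities on the old columns give, exactly as in \cref{thm:AhTransReg}(2.a.ii), $d_\hankel(\statep\sym) = k \otimes d_\hankel(\stateq\sym)$, while the identity on the new column $\sym\suffix$ gives $\hankel_{\statep\sym}(\suffix) = k \otimes \hankel_{\stateq\sym}(\suffix)$; dividing, the two normalized entries coincide, contradicting the trigger. Hence the class of $\statep$ splits, and by monotonicity the total count strictly increases.

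Case (2), non-closedness. The trigger is $\statep \in \prefixes$, $\sym \in \alphabet$ with $\hankel_{\statep\sym} \neq \zero$ and no $\stateq \in \prefixes$ satisfying $\statep\sym \sim_\hankel \stateq$, after which $\statep\sym$ is adjoined to $\prefixes$. Since the columns are unchanged the old classes survive, and $\statep\sym$ now contributes a class distinct from each of them, precisely by the failure of closedness. Thus $|\prefixes/\sim_\hankel|$ grows by exactly one.

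Case (3), counterexample, which I expect to be the main obstacle. Let $\hankel'$ denote the enlarged system after the prefixes of $\tt$ are added and the table refilled by membership queries. I would argue by contradiction: if $|\prefixes'/\sim_{\hankel'}| = |\prefixes/\sim_\hankel|$, then since $\prefixes \subseteq \prefixes'$ with unchanged columns the old $\prefixes$ is a transversal of $\prefixes'/\sim_{\hankel'}$, so the two systems induce the same partition, and the automaton rebuilt from $\hankel'$ coincides with $\emphankelautomaton$ (its states, transitions, weights, and initial/final weights are all functions of the classes of $\prefixes$ and of $d_\hankel$ over the unchanged $\suffixes$). By \cref{thm:empirical-Hankel-theorem}(2) this automaton contains $\hankel'$; evaluating at $\tt \in \prefixes'$ and $\eps \in \suffixes$ yields $\lang_{\emphankelautomaton}(\tt) = \hankel'(\tt,\eps) = \unknownLang(\tt)$, contradicting the defining property $\lang_{\emphankelautomaton}(\tt)\neq\unknownLang(\tt)$ of a counterexample. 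The delicate point is that \cref{thm:empirical-Hankel-theorem}(2) requires $\hankel'$ to still be closed and consistent: closedness is inherited because every new prefix is, by assumption, homothetic to an old one in an already closed and consistent system, but consistency of $\hankel'$ may genuinely fail. I would resolve this by observing that if consistency fails, the consistency repair of Case (1) applies and strictly increases the count; so whether or not the enlarged system is already consistent, processing the counterexample forces $|\prefixes/\sim_\hankel|$ to increase, as claimed.
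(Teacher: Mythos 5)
Your cases (1) and (2) are the paper's own argument, with the monotonicity of $\sim_{\hankel}$ under adding rows/columns made explicit and with the verification (via $d_\hankel(\statep\sym)=k\otimes d_\hankel(\stateq\sym)$, the same computation the paper performs inside \cref{thm:AhTransReg}) that the new column genuinely splits $[\statep]$ --- a step the paper merely asserts. The genuine divergence is case (3). The paper builds the two empirical Hankel automata $\emphankelautomaton$ and $\widetilde{\automaton}_{\hankel^{\tt}}$ before and after the counterexample, observes both contain $\emphankel$ and are inequivalent, and invokes the minimality statement \cref{thm:empirical-hankel-automaton-minimal}(iv) to force $|\states_{\widetilde{\automaton}_{\hankel^{\tt}}}|\ge|\prefixes/\sim_\hankel|+1$. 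You instead argue by contradiction using only the containment property \cref{thm:empirical-Hankel-theorem}(2): if the class count were unchanged, the automaton rebuilt from the enlarged table would coincide with $\emphankelautomaton$ and would satisfy $\lang_{\emphankelautomaton}(\tt)=\hankel'(\tt,\eps)=\unknownLang(\tt)$, contradicting that $\tt$ is a counterexample. Your route is more elementary (no appeal to the minimality theorem) and, more importantly, it confronts a point the paper's proof passes over silently: after the counterexample prefixes are added, the enlarged table need not be closed or consistent, so ``the corresponding empirical Hankel automaton $\widetilde{\automaton}_{\hankel^{\tt}}$'' is not automatically well-defined, and neither \cref{thm:empirical-Hankel-theorem} nor \cref{thm:empirical-hankel-automaton-minimal} applies to it without comment --- the paper implicitly assumes $\emphankel^{\tt}$ is again a bona fide empirical Hankel system. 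Your fallback (if consistency fails, the ensuing \consistent repair strictly increases the count by case (1), and monotonicity carries the increase back past the prefix additions) closes that gap, at the mild cost of reading the lemma as ``the count strictly increases by the time $\hankel$ is again closed and consistent,'' which is exactly what the termination and runtime analysis require. One caution: your parenthetical justification that ``closedness is inherited because every new prefix is homothetic to an old one'' is, on its own, circular --- propagating $\statep\sim_{\hankel'}\stateq_0$ to $\statep\sym\sim_{\hankel'}\stateq_0\sym$ is itself an instance of consistency of $\hankel'$ --- so it is valid only inside the branch of your dichotomy where $\hankel'$ is assumed consistent; keep that branch structure explicit and the argument is sound.
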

\begin{proof} \
    \begin{enumerate}[label=\textit{(\arabic*)}, nosep,wide, labelindent=0pt]
        \item If the empirical Hankel matrix is not consistent, \consistent (\cref{alg:weighted-learner}, \cref{alg:line-A1-consistent}) finds two prefixes $\prefix, \prefix' \in \prefixes$ such that $ \hankel_{\prefix} \equiv \hankel_{\prefix'}$ but $\hankel_{\prefix \sym} \not \equiv \hankel_{\prefix' \sym }$. 
        Then it searches for a tuple $(\sym,\suffix)$, $\sym \in \alphabet, \suffix \in \suffixesplus$ that makes the relation
        $\hankel_{\prefix \sym}\equiv \hankel_{\prefix' \sym}$ false, and adds $\sym \suffix$ to $\suffixes$. 
        After adding $\sym \suffix$ to $\suffixes$, we have that $\hankel_{\prefix} \not \equiv \hankel_{\prefix'}$, and therefore the equivalence class $[\prefix]$ is divided in two different ones.
        \item If $\hankel$ is not closed, \closed (\cref{alg:weighted-learner}, \cref{alg:line-A1-closed}) finds $\prefix \in \prefixes$ and $\sym \in \alphabet$ such that $\hankel_{\prefix  \sym} \nequiv \hankel_{\prefix'}$ for every $\prefix' \in \prefixes$ and adds $\prefix \sym$ to $\prefixes$. Since there was no $\prefix'$ in $\prefixes$ such that $\prefix' \sim_{\hankel} \prefix \sym $, it follows that a new equivalence class $[\prefix \sym]$ is added to $\prefixes / \sim_{\hankel}$. 
        \item When the Oracle replies negatively to the equivalence query, the counterexample $\tt$, together with its prefixes, is added to $\prefixes$. We show that even in this case, $\dim(\emphankel)$ increases. Let $\emphankel$ and $\emphankel^{\tt}$ denote the empirical Hankel system before and after adding $\tt$, and let $\emphankelautomaton$ and $\widetilde{\automaton}_{\hankel^{\tt}}$ denote the corresponding empirical Hankel automaton in each case. We note that both $\emphankelautomaton$ and $\widetilde{\automaton}_{\hankel^{\tt}}$ contain $\emphankel$ and therefore by \cref{thm:empirical-hankel-automaton-minimal}---since the automata are not equivalent---$\emphankelautomaton^{\tt}$ must have at least one more state than $\emphankelautomaton$. By construction of the empirical Hankel automaton, we know that this implies that $\dim(\emphankel)$ increases.\qedhere
        \end{enumerate}
\end{proof}

Let $(\emphankel_k= (\prefixes_k,\suffixes_k,\hankel_k))_{k\ge 0}$ be the sequence of empirical Hankel systems constructed at each iteration of the main loop of \cref{alg:weighted-learner}. 
By \cref{lemma: increase}, this sequence is increasing; that is, {$\emphankel_k \preceq \emphankel_{k+1}$ for all $k\ge 0$}. 
Let $\unknownAut$ denote any minimal automaton for the target language $\unknownLang$. 
On the one hand, we know that the automaton $\unknownAut$ contains $\emphankel_k$ for all $k\ge 0$. 
On the other hand, by \cref{lemma: increase}, there exists $n \in \N$ such that $\dim(\emphankel_n) =|\prefixes_n/\sim_{\automaton_{\hankel_n}}|= |\states_{\unknownAut}|$. 
Therefore, by applying \cref{cor:termination}, we conclude that $\unknownLang=\lang_{\automaton_{\hankel_n}}$. 

Consequently, after a finite number of iterations, the oracle will respond positively to the equivalence query, causing the algorithm to halt. 
Furthermore, we observe that the inner loop of \cref{alg:weighted-learner} executes at most  $|\states_\unknownAut|$ times, as $\dim(\emphankel)$ increases at every step by \cref{lemma: increase}, and at each iteration, $\unknownAut$ continues to contain $\hankel$.

\subsection{Run-Time}
First, we note that since $\unknownLang$ always contains $\hankel$, by \cref{lemma: increase} any of the following events can only occur at most $N$ times in total, where $N$ is the number of states of the minimal automaton accepting $\unknownLang$: \emph{i)} we add a prefix because the table is not closed, \emph{ii)} we add a suffix because the table is not consistent, \emph{iii)} we add a counterexample because the oracle replies negatively to the equivalence query.
Then, let us give a bound on the size of the prefix and suffix sets:
\begin{itemize}[nosep]
    \item $|\prefixes| \in \bigo{NM}$, in fact initially $\prefixes = \{ \varepsilon \}$, and $\prefixes$ can be incremented at most $N$ times because the matrix is not consistent and at most $NM$ times because the oracle replies with a counterexample, where $M$ is the length of the longest counterexample.
    \item $|\suffixes| \in \bigo{N}$, in fact initially $\suffixes = \{ \varepsilon \}$, and $\suffixes$ can be incremented at most $N$ times.
\end{itemize}

Next, we shall analyze the runtime of the operations executed during the main loop of \cref{alg:weighted-learner}.

\begin{itemize}[nosep]
    \item $\consistent \in \bigo{ \abs{\prefixes}^2\abs{\suffixes}^2\abs{\alphabet}^2}$
    \item $\closed \in \bigo{ \abs{\prefixes}^2 \abs{\suffixes} \abs{\alphabet}^2}$
    \item $ \complete \in \bigo{ \abs{\prefixes} \abs{\suffixes} \abs{\alphabet}^2}$
    \item $\makeAutomaton \in \bigo{\abs{\prefixes}+\abs{\prefixes} \abs{\alphabet}}$, 
\end{itemize}

In the analysis above we used the fact that the map $d_{\hankel}: \prefixesplus \rightarrow \weightset$ is fixed for every empirical Hankel matrix and can be precomputed in time $\bigo{|\prefixes| |\suffixes| |\alphabet|^2}$ and reused multiple times.

We note that each of these operations can be executed at most $N$ times before the algorithm halts, and therefore---by substituting in the bounds for $\prefixes$ and $\suffixes$---we compute the total runtime of \cref{alg:weighted-learner} as:
\begin{subequations}
\begin{align}
    &\bigo{N\left( N^4 M^2 \abs{\alphabet}^2 + N^3 M^2 \abs{\alphabet}^2 + N^2 M \abs{\alphabet}^2 + N^2 M \abs{\alphabet} + MN + NM\abs{\alphabet} \right)} \\
    & =\bigo{N^5 M^2 \abs{\alphabet}^2}
\end{align}
\end{subequations}

We note an important distinction between our weighted version of $\lstar$ and \citeposs{ANGLUIN198787}. In fact, in the weighted case, we need to make sure that the empirical Hankel matrix has a column for every element $\suffixesplus$ and not only for $\suffixes$. This is a fundamental step to enforce that the relation $\sim_{\hankel}$ is transition regular (\cref{thm:AhTransReg}), and it is related to the fact that in the weighted case, we don't seek language equality, but rather equality modulo multiplication by a constant $k$.

\end{document}